\definecolor{darkblue}{rgb}{0.0,0.5,0.5}
\definecolor{blue}{rgb}{0.0,0.5,0.68}
\newtheorem{lemma}{Lemma}
\journal{Transportation Research Part C: Emerging Technologies}
\begin{document}

\begin{frontmatter}

%% Title, authors and addresses

%% use the tnoteref command within \title for footnotes;
%% use the tnotetext command for the associated footnote;
%% use the fnref command within \author or \address for footnotes;
%% use the fntext command for the associated footnote;
%% use the corref command within \author for corresponding author footnotes;
%% use the cortext command for the associated footnote;
%% use the ead command for the email address,
%% and the form \ead[url] for the home page:
%%
%% \title{Title\tnoteref{label1}}
%% \tnotetext[label1]{}
%% \author{Name\corref{cor1}\fnref{label2}}
%% \ead{email address}
%% \ead[url]{home page}
%% \fntext[label2]{}
%% \cortext[cor1]{}
%% \address{Address\fnref{label3}}
%% \fntext[label3]{}

\title{{\fontfamily{lmss}\selectfont Scalable Low-Rank Tensor Learning for Spatiotemporal Traffic Data Imputation}}%Large-Scale and High-Dimensional Traffic Data Imputation with Low-Rank Autoregressive Tensor Learning}}

\author[label]{Xinyu Chen}
\ead{chenxy346@gmail.com}

\author[label2]{Yixian Chen}
\ead{yxnchen.work@gmail.com}

\author[label]{Nicolas Saunier}
\ead{nicolas.saunier@polymtl.ca}

\author[label1]{Lijun Sun\corref{cor1}}
\ead{lijun.sun@mcgill.ca}

\address[label]{Department of Civil, Geological and Mining Engineering, Polytechnique Montreal, Montreal, QC H3T 1J4, Canada}
\address[label2]{School of Intelligent Systems Engineering, Sun Yat-Sen University, Guangzhou 510006, China}
\address[label1]{Department of Civil Engineering, McGill University, Montreal, QC H3A 0C3, Canada}
%\address[label3]{Interuniversity Research Centre on Enterprise Networks, Logistics and Transportation (CIRRELT), Montreal, QC H3T 1J4, Canada}

\cortext[cor1]{Corresponding author. Address: 492-817 Sherbrooke Street West, Macdonald Engineering Building, Montreal, Quebec H3A 0C3, Canada}

\begin{abstract}
Missing value problem in spatiotemporal traffic data has long been a challenging topic, in particular for large-scale and high-dimensional data with complex missing mechanisms and diverse degrees of missingness. Recent studies based on tensor nuclear norm have demonstrated the superiority of tensor learning in imputation tasks by effectively characterizing the complex correlations/dependencies in spatiotemporal data. However, despite the promising results, these approaches do not scale well to large data tensors. In this paper, we focus on addressing the missing data imputation problem for large-scale spatiotemporal traffic data. To achieve both high accuracy and efficiency, we develop a scalable tensor learning model---Low-Tubal-Rank Smoothing Tensor Completion (LSTC-Tubal)---based on the existing framework of Low-Rank Tensor Completion, which is well-suited for spatiotemporal traffic data that is characterized by multidimensional structure of location$\times$ time of day $\times$ day. In particular, the proposed LSTC-Tubal model involves a scalable tensor nuclear norm minimization scheme by integrating linear unitary transformation. Therefore, tensor nuclear norm minimization can be solved by singular value thresholding on the transformed matrix of each day while the day-to-day correlation can be effectively preserved by the unitary transform matrix. Before setting up the experiment, we consider some real-world data sets, including two large-scale 5-minute traffic speed data sets collected by the California PeMS system with 11160 sensors: 1) PeMS-4W covers the data over 4 weeks (i.e., $288\times 28$ time points), and 2) PeMS-8W covers the data over 8 weeks (i.e., $288\times 56$ time points). We compare LSTC-Tubal with state-of-the-art baseline models, and find that LSTC-Tubal can achieve competitively accuracy with a significantly lower computational cost. In addition, the LSTC-Tubal will also benefit other tasks in modeling large-scale spatiotemporal traffic data, such as network-level traffic forecasting.

\end{abstract}

\begin{keyword}
Spatiotemporal traffic data, high-dimensional data, missing data imputation, low-rank tensor completion, linear unitary transformation, quadratic variation
\end{keyword}

\end{frontmatter}

\section{Introduction}

With remarkable advances in low-cost sensing technologies, an increasing amount of real-world traffic measurement data are collected with high spatial sensor coverage and fine temporal resolution. These large-scale and high-dimensional traffic data provide us with unprecedented opportunities for sensing traffic dynamics and developing efficient and reliable applications for intelligent transportation systems (ITS). However, there are two critical issues that undermine the use of these data in real-world applications: (1) the presence of missing and corrupted data remains a primary challenge and makes it difficult to get the true signals, and (2) it is computationally challenging to perform large-scale and high-dimensional traffic data analysis using existing methods/algorithms. The issue of missing and corrupted data may arise from complicated sensor malfunctioning, communication failure, and even unsatisfied sensor coverage. To improve data quality and support downstream applications, missing data imputation becomes an essential task. Despite recent efforts and advances in developing machine learning-based imputation models, most studies still focus on small-scale data sets (e.g., 100$\sim$400 time series) and it remains a critical challenge to perform accurate and efficient imputation on large-scale traffic data sets.

In the literature, there are numerous methods for modeling corrupted spatiotemporal traffic data. We summarize these methods from the aspect of time series modeling in two main categories. The first category focuses on single time series modeling, which includes regression methods such as linear regression and autoregressive model \citep{Schafer1997Analysis, chen2000nearest}. Although this approach is simple, it cannot characterize the correlations/dependencies among different sensors in a spatiotemporal setting. To address this problem, the second category focuses on multivariate/multidimensional setting by modeling multivariate time series data using matrix/tensor structures. In this category, important frameworks include: 1) classical time series analysis methods like vector autoregressive model \citep{Bashir2017}, 2) purely low-rank matrix factorization/completion, e.g., low-rank matrix factorization \citep{asif2013low,asif2016matrix}, principle component analysis \citep{qu2008abpca,qu2009ppca} and their variants, 3) purely low-rank tensor factorization/completion, e.g., Bayesian tensor factorization \citep{chen2019abayesian} and low-rank tensor completion (LRTC) \citep{ran2016tensor,chen2020anonconvex}, and 4) low-rank matrix factorization by integrating time series models, e.g., temporal regularized matrix factorization \citep{yu2016temporal} and Bayesian temporal matrix factorization \citep{chen2019bayesian}. For this large family of methods, a common goal is to capture both temporal dynamics and spatial consistency. For example, it is possible to report superior performance by imposing both local consistency and global consistency in traffic series data \citep{li2015trend}. In addition, there are also some nonlinear methods (e.g., deep learning in \citet{Che2018Recurrent}) that have been applied to address the problem of missing traffic data. Nevertheless, how to generalize these methods to large-scale problems is a long-standing technical gap between research and real-world applications.

Thanks to recent advances in low-rank model, it becomes technically feasible to solve large-scale matrix/tensor learning problems through a series of ``smaller'' subproblems by integrating linear transforms (e.g., discrete Fourier transform, discrete cosine transform) \citep{rao1990dct}. For instance, to learn from partially observed tensor in large-scale image/video data, recent studies have integrated invertible linear transform into LRTC. Linear transforms like Fourier transform and wavelet transform serve as a key role in developing accurate and fast multidimensional tensor completion models \citep{kernfeld2015tensor}. \cite{lu2016tensor,lu2019lowrank,lu2020tensor} introduced some invertible linear transforms (e.g., discrete Fourier transform, discrete cosine transform) into LRTC by connecting tensor tubal rank with tensor singular value decomposition (SVD). As demonstrated in these work, LRTC with invertible linear transforms outperforms by a large margin over LRTC with weighted tensor nuclear norm proposed by \cite{liu2013tensor}. The discrete Fourier transform is the most commonly-used invertible linear transform in the relevant work. To enhance LRTC algorithms with well-suited linear transform, \cite{song2020robust} introduced a data-driven unitary transform (computed from singular vectors of data) in LRTC framework. In this work, numerical experiments demonstrate that unitary transform performs better than both discrete Fourier transform and discrete wavelet transform for tensor completion tasks.

In modeling spatiotemporal traffic data, the most fundamental assumption is that the data has an inherent low-rank structure either in the matrix or in the tensor form. Therefore, the idea of combining tensor completion and linear transform is also well-suited for spatiotemporal traffic data because day-to-day (or week-to-week) correlation of the multivariate traffic time series can be effectively encoded by linear transforms. Inspired by the idea of linear transforms, we can first convert large-scale tensor completion into a series of ``small'' daily (or weekly) traffic data imputation problems and then connect their solutions by linear transforms, thus overcoming the scalability challenge. Following this idea, in this work, we introduce a scalable low-rank tensor learning model to impute missing traffic data. Our contribution is three-fold:

\begin{itemize}
    \item We develop a Low-Tubal-Rank Smoothing Tensor Completion (LSTC-Tubal) model by integrating linear unitary transforms, which is both accurate and efficient for large-scale and high-dimensional traffic data imputation.
    \item We provide new insight on spatiotemporal data modeling where the large-scale data tensor problem can be solved equivalently through a series of ``small'' subproblems by introducing linear transform.
    \item We conduct extensive imputation experiments on some real-world large-scale traffic data sets. The results show that the proposed LSTC-Tubal model is much more efficient than state-of-the-art models while maintaining comparable accuracy.
\end{itemize}

The remainder of this paper is organized as follows. We introduce notations and problem definition in Section~\ref{preliminaries}. Section~\ref{mothodology} introduces in detail the proposed low-rank tensor learning model. In Section~\ref{experiments}, we conduct numerical experiments on some large-scale traffic data sets and show the performance of LSTC-Tubal compared with several state-of-the-art imputation models. Finally, we summarize the study and discuss future research directions in Section~\ref{conclusion}.

\section{Preliminaries}\label{preliminaries}

In this section, we first introduce some fundamental notations and background. After that, we define the problem of missing data imputation in the transportation context.

\subsection{Notations}

In this work, we use boldface uppercase letters to denote matrices, e.g., $\boldsymbol{X}\in\mathbb{R}^{M\times N}$, boldface lowercase letters to denote vectors, e.g., $\boldsymbol{x}\in\mathbb{R}^{M}$, and lowercase letters to denote scalars, e.g., $x$. Given a matrix $\boldsymbol{X}\in\mathbb{R}^{M\times N}$, we denote the $(m,n)$th entry in $\boldsymbol{X}$ by $x_{m,n}$. The Frobenius norm of $\boldsymbol{X}$ is defined as $\|\boldsymbol{X}\|_{F}=\sqrt{\sum_{m,n}x_{m,n}^{2}}$, and the $\ell_2$-norm of $\boldsymbol{x}$ is defined as $\|\boldsymbol{x}\|_{2}=\sqrt{\sum_{m}x_{m}^{2}}$. We denote a third-order tensor by $\boldsymbol{\mathcal{X}}\in\mathbb{R}^{M\times I\times J}$ and the $k$th-mode ($k=1,2,3$) unfolding of $\boldsymbol{\mathcal{X}}$ by $\boldsymbol{\mathcal{X}}_{(k)}$ \citep{kolda2009tensor}. Correspondingly, a folding operator that converts a matrix to a third-order tensor in the $k$th-mode is $\operatorname{fold}_{k}(\cdot)$; thus, we have $\operatorname{fold}_{k}(\boldsymbol{\mathcal{X}}_{(k)})=\boldsymbol{\mathcal{X}}$. For the tensor $\boldsymbol{\mathcal{X}}\in\mathbb{R}^{M\times I\times J}$, we define its Frobenius norm as $\|\boldsymbol{\mathcal{X}}\|_{F}=\sqrt{\sum_{m,i,j}x_{m,i,j}^{2}}$ and its inner product with another tensor as $\left\langle\boldsymbol{\mathcal{X}},\boldsymbol{\mathcal{Y}}\right\rangle=\sum_{m,i,j}x_{m,i,j}y_{m,i,j}$ where $\boldsymbol{\mathcal{Y}}$ is of the same size as $\boldsymbol{\mathcal{X}}$. We denote by $\boldsymbol{X}_{j}=\boldsymbol{\mathcal{X}}_{:,:,j}\in\mathbb{R}^{M\times I}$ the $j$th frontal slice of tensor $\boldsymbol{\mathcal{X}}\in\mathbb{R}^{M\times I\times J}$.

Unitary transform, as a powerful tool in the digital signal processing area, is defined as the invertible linear transform which includes a category of transform (e.g., discrete cosine/sine transform) satisfying the rules of orthogonality and normalization \citep{rao1990dct}. In this work, $\boldsymbol{\Phi}[\cdot]$ represents a unitary transform. We apply the unitary transform to this work by introducing a unitary matrix. Without loss of generality, we specify that the unitary transform imposed on any tensor $\boldsymbol{\mathcal{X}}\in\mathbb{R}^{M\times I\times J}$ is along the third dimension and assume that unitary transform is acted on real-valued data. Therefore, the forward unitary transform can be defined as follows,
\begin{equation}
    \boldsymbol{\Phi}[\boldsymbol{\mathcal{X}}]\equiv\operatorname{fold}_{3}(\boldsymbol{\Phi}^{\top}\boldsymbol{\mathcal{X}}_{(3)})\in\mathbb{R}^{M\times I\times J},
\end{equation}
where $\boldsymbol{\Phi}\in\mathbb{R}^{J\times J}$ is the unitary matrix. Correspondingly, the inverse unitary transform for any given tensor $\boldsymbol{\mathcal{X}}\in\mathbb{R}^{M\times I\times J}$ is
\begin{equation}
    \boldsymbol{\Phi}^{-1}[\boldsymbol{\mathcal{X}}]\equiv\operatorname{fold}_{3}(\boldsymbol{\Phi}\boldsymbol{\mathcal{X}}_{(3)})\in\mathbb{R}^{M\times I\times J}.
\end{equation}

\begin{figure}
\centering
  \includegraphics[width=0.8\textwidth]{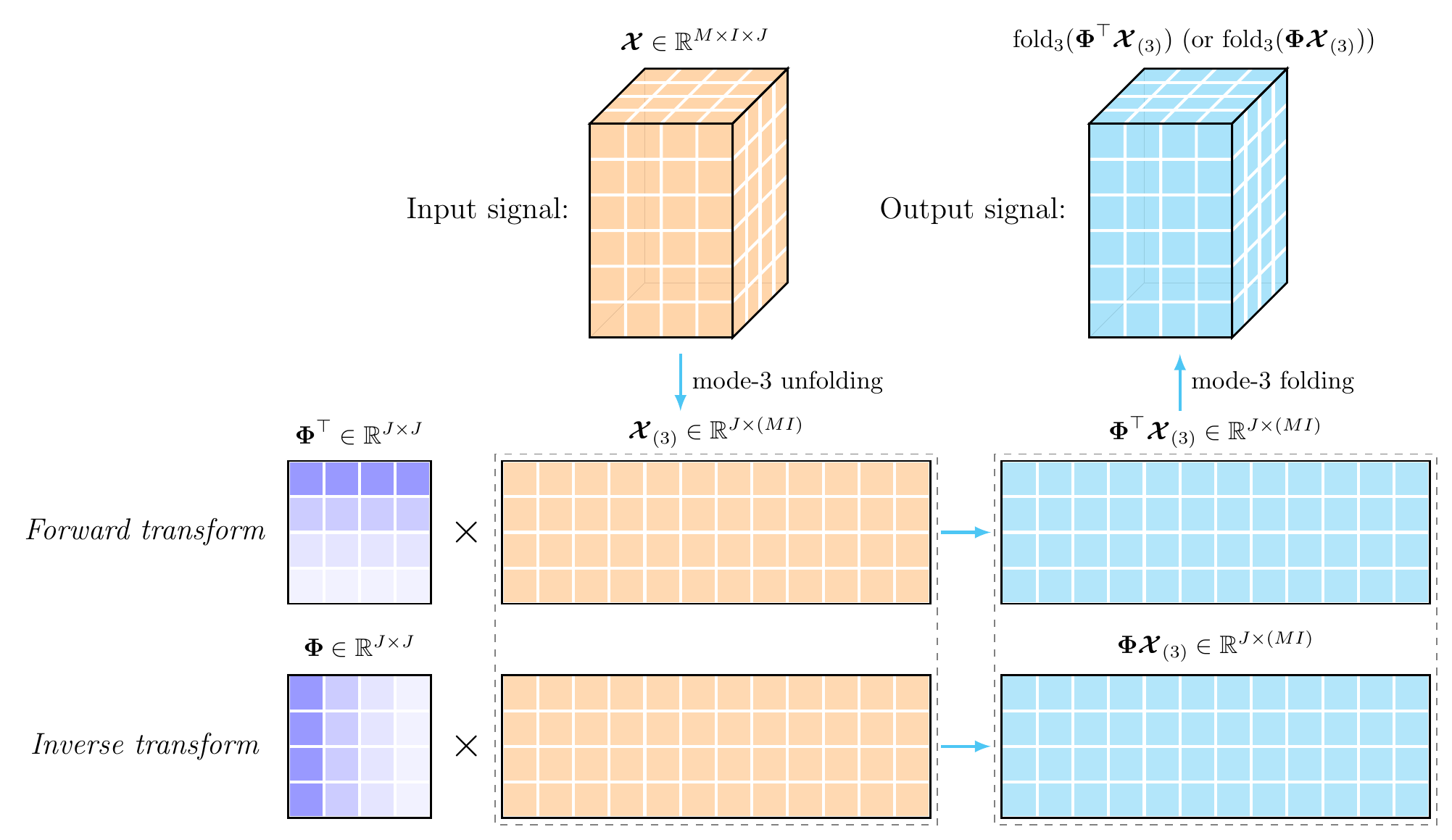}
\caption{Schematic of the forward and inverse unitary transforms along the third dimension of $\boldsymbol{\mathcal{X}}\in\mathbb{R}^{M\times I\times J}$ where $\boldsymbol{\Phi}\in\mathbb{R}^{J\times J}$ is the unitary transform matrix.}
\label{unitary_transform}
\end{figure}

Putting the forward and inverse unitary transforms together, there holds $\boldsymbol{\mathcal{X}}=\boldsymbol{\Phi}^{-1}[\boldsymbol{\Phi}[\boldsymbol{\mathcal{X}}]]$. Figure~\ref{unitary_transform} intuitively shows how to implement forward and inverse unitary transforms along the third dimension of $\boldsymbol{\mathcal{X}}\in\mathbb{R}^{M\times I\times J}$. In this work, we generate the unitary matrix $\boldsymbol{\Phi}$ by the SVD of mode-3 unfolding of the data tensor, and \cite{song2020robust} called the derived matrix as data-driven unitary transform matrix.

\subsection{Problem Definition}

In this work, we introduce spatiotemporal missing time series data imputation tasks in a general sense. For any given partially observed data matrix $\boldsymbol{Y}$ whose columns correspond to time points and rows correspond to spatial locations/sensors:
\begin{equation}
    \boldsymbol{Y}=\left[\begin{array}{cccc}
    \mid & \mid & & \mid \\
    \boldsymbol{y}_{1} & \boldsymbol{y}_{2} & \cdots & \boldsymbol{y}_{IJ} \\
    \mid & \mid & & \mid
    \end{array}\right]\in\mathbb{R}^{M\times (IJ)},
\end{equation}
where $M$ is the number of spatial locations/sensors and $IJ$ is the number of continuous time points. Then the observed values of $\boldsymbol{Y}$ can be written as $\mathcal{P}_{\Omega}(\boldsymbol{Y})$ in which the operator $\mathcal{P}_{\Omega}:\mathbb{R}^{M\times (IJ)}\mapsto\mathbb{R}^{M\times (IJ)}$ is an orthogonal projection supported on the observed index set $\Omega$:
\begin{equation} \notag
    [\mathcal{P}_{\Omega}(\boldsymbol{Y})]_{m,n}=\left\{\begin{array}{ll}
    y_{m,n},     & \text{if $(m,n)\in\Omega$,}  \\
    0,     & \text{otherwise}, \\
    \end{array}\right.
\end{equation}
where $m=1,\ldots,M$ and $n=1,\ldots,IJ$. As shown in Figure~\ref{framework}, the modeling target of missing data imputation can be described as learning the unobserved values from partially observed $\mathcal{P}_{\Omega}(\boldsymbol{Y})$.

% where $m=1,...,M$ and $n=1,...,IJ$. The operator $\mathcal{P}_{\Omega}^{\perp}:\mathbb{R}^{M\times (IJ)}\mapsto\mathbb{R}^{M\times (IJ)}$ denotes the projection on the complementary set of $\Omega$. The relationship between these two operators is $\mathcal{P}_{\Omega}(\boldsymbol{Y})+\mathcal{P}_{\Omega}^{\perp}(\boldsymbol{Y})=\boldsymbol{Y}$. With these definitions, the modeling target of missing data imputation can be described as learning the values of $\mathcal{P}_{\Omega}^{\perp}(\boldsymbol{Y})$ from partially observed $\mathcal{P}_{\Omega}(\boldsymbol{Y})$.

In particular, to take advantage of the algebraic structure of tensor, we introduce a forward tensorization operator $\mathcal{Q}(\cdot)$ that converts the spatiotemporal time series matrix into a third-order tensor (see Figure~\ref{framework}). This operation is implemented by splitting the time dimension into (time point per day, day)-indexed combinations. For example, we can generate a third-order tensor by the forward tensorization operator as $\boldsymbol{\mathcal{X}}=\mathcal{Q}(\boldsymbol{Y})\in\mathbb{R}^{M\times I\times J}$. Conversely, we can also convert the resulted tensor into the original matrix by $\boldsymbol{Y}=\mathcal{Q}^{-1}(\boldsymbol{\mathcal{X}})\in\mathbb{R}^{M\times (IJ)}$ in which $\mathcal{Q}^{-1}(\cdot)$ denotes the inverse operator of $\mathcal{Q}(\cdot)$.

\section{Methodology}\label{mothodology}

In this section, we develop a Low-Tubal-Rank Smoothing Tensor Completion (LSTC-Tubal) model for spatiotemporal traffic data imputation. To overcome the large scale and high dimensionality of real-world traffic data, we propose to integrate linear unitary transform into the LSTC-Tubal model.

\subsection{Model Description}

In fact, spatiotemporal traffic data are multivariate/multidimensional time series associated with both spatial and temporal dimensions. To characterize these data with a suitable algebraic structure, one can represent these data in the form of multivariate time series matrix or multidimensional tensor. Thus, the motivation behind modeling spatiotemporal traffic data in this work is to impose a low-rank structure on the data tensor and build a temporal smoothing process on the data matrix simultaneously. To do so, the essential idea of LSTC-Tubal takes into account both tensor nuclear norm minimization and quadratic variation minimization in a unified tensor completion framework:
\begin{equation}
    \begin{aligned}
    \min _{\boldsymbol{\mathcal{X}},\boldsymbol{Z}}~&\|\boldsymbol{\mathcal{X}}\|_{*}+\frac{\lambda}{2}\sum_{t}\|\boldsymbol{z}_{t}-\boldsymbol{z}_{t-1}\|_{2}^{2} \\ \text { s.t.}~&\left\{\begin{array}{l}\boldsymbol{\mathcal{X}}=\mathcal{Q}\left(\boldsymbol{Z}\right), \\ \mathcal{P}_{\Omega}(\boldsymbol{Z})=\mathcal{P}_{\Omega}(\boldsymbol{Y}), \\ \end{array}\right. \\
    \end{aligned}
    \label{lrtc_ar}
\end{equation}
where $\boldsymbol{Y}\in\mathbb{R}^{M\times (IJ)}$ is the partially observed time series matrix. The variable $\boldsymbol{\mathcal{X}}$ is a low-rank tensor while the variable $\boldsymbol{Z}$ is a time series matrix. In the objective, $\boldsymbol{z}_{t-1}$ and $\boldsymbol{z}_{t}$ are the $t-1$th and $t$th columns of the matrix $\boldsymbol{Z}$, respectively. The positive weight parameter $\lambda$ is a trade-off between the tensor nuclear norm and the quadratic variation. We can see intuitively from Figure~\ref{framework} that the framework are able to reconstruct $\boldsymbol{Y}$ with low-rank patterns and time series smoothness because the established constraint $\boldsymbol{\mathcal{X}}=\mathcal{Q}(\boldsymbol{Z})$ is closely related to the partially observed matrix $\boldsymbol{Y}$.

In the objective, we also strengthen the tensor completion model by providing a rigorous connection between tensor tubal rank and tensor nuclear norm (denote by $\|\cdot\|_{*}$) through invertible linear transform. Notably, under invertible linear transform, tensor nuclear norm minimization has been verified as the convex surrogate to the tensor tubal rank minimization \citep{kernfeld2015tensor,lu2016tensor}.

\begin{figure}
\centering
  \includegraphics[width=0.8\textwidth]{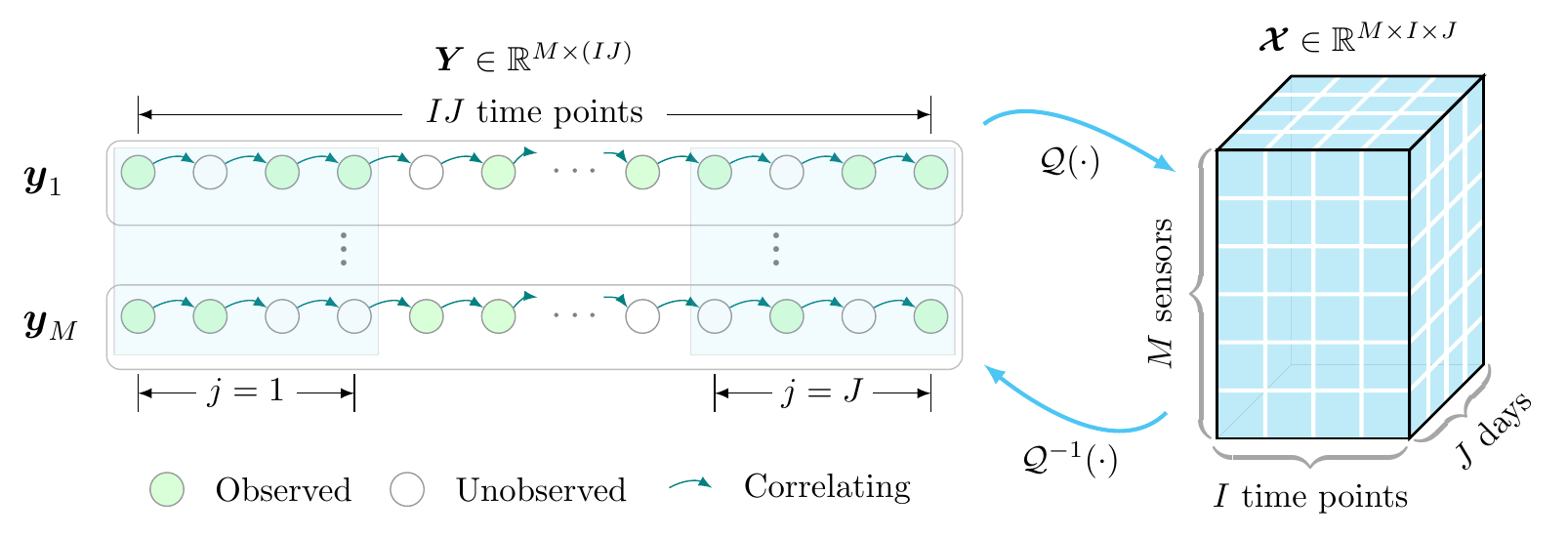}
\caption{Illustration of the proposed LSTC-Tubal framework for spatiotemporal traffic data imputation.}
\label{framework}
\end{figure}

To solve the optimization in Eq.~\eqref{lrtc_ar}, we follow the general idea of tensor completion \citep{liu2013tensor} and use the Alternating Direction Method of Multipliers (ADMM) algorithm: Starting with some given initial point $(\boldsymbol{\mathcal{X}}^{0},\boldsymbol{Z}^{0})$, we generate a sequence $\{(\boldsymbol{\mathcal{X}}^{\ell},\boldsymbol{Z}^{\ell})\}_{\ell\in\mathbb{N}}$ via the scheme. In this case, the augmented Lagrangian function can be written as follows:
\begin{equation}\label{augmented_lagrangian_func}
    \mathcal{L}(\boldsymbol{\mathcal{X}},\boldsymbol{Z},\boldsymbol{\mathcal{T}})=\|\boldsymbol{\mathcal{X}}\|_{*}+\frac{\lambda}{2}\sum_{t}\|\boldsymbol{z}_{t}-\boldsymbol{z}_{t-1}\|_{2}^{2}+\frac{\rho}{2}\|\boldsymbol{\mathcal{X}}-\mathcal{Q}(\boldsymbol{Z})\|_{F}^{2}+\big\langle\boldsymbol{\mathcal{X}}-\mathcal{Q}(\boldsymbol{Z}),\boldsymbol{\mathcal{T}}\big\rangle,
\end{equation}
where $\rho$ is the learning rate of ADMM algorithm, $\big\langle\cdot,\cdot\big\rangle$ denotes the inner product, and $\boldsymbol{\mathcal{T}}\in\mathbb{R}^{M\times I\times J}$ is the dual variable. In particular, $\mathcal{P}_{\Omega}(\boldsymbol{Z})=\mathcal{P}_{\Omega}(\boldsymbol{Y})$ is the fixed constraint for maintaining observation information. According to the augmented Lagrangian function, ADMM can transform the problem in Eq.~\eqref{lrtc_ar} into the following subproblems in an iterative manner:
\begin{align}
    \boldsymbol{\mathcal{X}}^{\ell+1}:&=\operatorname{arg}\min_{\boldsymbol{\mathcal{X}}}~\mathcal{L}(\boldsymbol{\mathcal{X}},\boldsymbol{Z}^{\ell},\boldsymbol{\mathcal{T}}^{\ell}),\label{sub1} \\
    \boldsymbol{Z}^{\ell+1}:&=\operatorname{arg}\min_{\boldsymbol{Z}}~\mathcal{L}(\boldsymbol{\mathcal{X}}^{\ell+1},\boldsymbol{Z},\boldsymbol{\mathcal{T}}^{\ell}),\label{sub2} \\
    \boldsymbol{\mathcal{T}}^{\ell+1}:&=\boldsymbol{\mathcal{T}}^{\ell}+\rho(\boldsymbol{\mathcal{X}}^{\ell+1}-\mathcal{Q}(\boldsymbol{Z}^{\ell+1})),\label{sub3}
\end{align}
where $\ell$ denotes the count of iteration. In what follows, we discuss and show the detailed solutions to Eqs.~\eqref{sub1} and \eqref{sub2}.

\subsection{Computing the Variable $\boldsymbol{\mathcal{X}}$}

In the formulated model, the variable $\boldsymbol{\mathcal{X}}$ is in the form of tensor. With respect to the variable $\boldsymbol{\mathcal{X}}$, the expression in Eq.~\eqref{augmented_lagrangian_func} can be simplified to a standard tensor nuclear norm minimization. Thus, Eq.~\eqref{sub1} yields
\begin{equation}\label{sub1_formula}
\begin{aligned}
    \boldsymbol{\mathcal{X}}^{\ell+1}:&=\operatorname{arg}\min_{\boldsymbol{\mathcal{X}}}~\|\boldsymbol{\mathcal{X}}\|_{*}+\frac{\rho}{2}\left\|\boldsymbol{\mathcal{X}}-\mathcal{Q}(\boldsymbol{Z}^{\ell})\right\|_{F}^{2}+\big\langle\boldsymbol{\mathcal{X}}-\mathcal{Q}(\boldsymbol{Z}^{\ell}),\boldsymbol{\mathcal{T}}^{\ell}\big\rangle \\
    &=\operatorname{arg}\min_{\boldsymbol{\mathcal{X}}}~\|\boldsymbol{\mathcal{X}}\|_{*}+\frac{\rho}{2}\left\|\boldsymbol{\mathcal{X}}-(\mathcal{Q}(\boldsymbol{Z}^{\ell})-\boldsymbol{\mathcal{T}}^{\ell}/\rho)\right\|_{F}^{2}. \\
\end{aligned}
\end{equation}

In this case, the optimization has many solutions, including tensor singular value thresholding for weighted tensor nuclear norm minimization by \cite{liu2013tensor}. One solution by integrating unitary transform can be found in Lemma~\ref{tensor_svt}.

\begin{lemma}\label{tensor_svt}
\citep{song2020robust} For any tensor $\boldsymbol{\mathcal{Z}}\in\mathbb{R}^{M\times I\times J}$, suppose $\boldsymbol{\Phi}\in\mathbb{R}^{J\times J}$ is a unitary transform matrix and the SVD of the unitary transformed matrix $\boldsymbol{\Phi}_{j}[\boldsymbol{\mathcal{Z}}]$ (i.e., the $j$th frontal slice of the unitary transformed tensor $\boldsymbol{\Phi}[\boldsymbol{\mathcal{Z}}]$) is
\begin{equation}
    \boldsymbol{\Phi}_{j}[\boldsymbol{\mathcal{Z}}]=\boldsymbol{U}_{\boldsymbol{\Phi}}\boldsymbol{\Sigma}_{\boldsymbol{\Phi}}\boldsymbol{V}_{\boldsymbol{\Phi}}^{\top},
\end{equation}
where $\boldsymbol{U}_{\boldsymbol{\Phi}}\in\mathbb{R}^{M\times R},\boldsymbol{\Sigma}_{\boldsymbol{\Phi}}\in\mathbb{R}^{R\times R},\boldsymbol{V}_{\boldsymbol{\Phi}}\in\mathbb{R}^{I\times R}$ ($R=\min\{M,I\}$), then an optimal solution to the following optimization problem
\begin{equation}
    \min_{\boldsymbol{\mathcal{X}}}~\|\boldsymbol{\mathcal{X}}\|_{*}+\frac{\rho}{2}\|\boldsymbol{\mathcal{X}}-\boldsymbol{\mathcal{Z}}\|_{F}^{2},
\end{equation}
is given by the tensor singular value thresholding:
\begin{equation}\label{tensor_svt_solution}
    \hat{\boldsymbol{X}}_{j}:=\boldsymbol{\Phi}^{-1}[\mathcal{D}_{1/\rho}(\boldsymbol{\Phi}_{j}[\boldsymbol{\mathcal{Z}}])],\quad\mathcal{D}_{1/\rho}(\boldsymbol{\Phi}_{j}[\boldsymbol{\mathcal{Z}}])=\boldsymbol{U}_{\boldsymbol{\Phi}}[\boldsymbol{\Sigma}_{\boldsymbol{\Phi}}-1/\rho]_{+}\boldsymbol{V}_{\boldsymbol{\Phi}}^{\top},
\end{equation}
where $\hat{\boldsymbol{X}}_{j}\in\mathbb{R}^{M\times I}$ is the $j$th frontal slice of the estimated tensor $\hat{\boldsymbol{\mathcal{X}}}\in\mathbb{R}^{M\times I\times J}$. The operator $\mathcal{D}_{1/\rho}(\cdot)$ denotes the matrix singular value thresholding associated with parameter $\rho$. The symbol $[\cdot]_{+}$ denotes the positive truncation at 0 which satisfies $[\sigma-1/\rho]_{+}=\max\{\sigma-1/\rho,0\}$.
\end{lemma}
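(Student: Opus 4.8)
The plan is to exploit the orthogonality of $\boldsymbol{\Phi}$ to decouple the objective across frontal slices in the transformed domain, reducing the tensor optimization to $J$ independent matrix singular value thresholding problems, each of which is resolved by the classical matrix result and then inverted through $\boldsymbol{\Phi}^{-1}$.

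First I would recall that the transform-based tensor nuclear norm is separable over the transformed frontal slices, namely $\|\boldsymbol{\mathcal{X}}\|_{*}=\sum_{j=1}^{J}\|\boldsymbol{\Phi}_{j}[\boldsymbol{\mathcal{X}}]\|_{*}$, where the right-hand side is a sum of ordinary matrix nuclear norms. Next, because $\boldsymbol{\Phi}$ is unitary, the forward transform preserves the Frobenius norm: using $\boldsymbol{\Phi}^{\top}\boldsymbol{\Phi}=\boldsymbol{I}$ together with the mode-$3$ unfolding definition and the linearity of $\boldsymbol{\Phi}[\cdot]$ gives the Parseval-type identity $\|\boldsymbol{\mathcal{X}}-\boldsymbol{\mathcal{Z}}\|_{F}^{2}=\|\boldsymbol{\Phi}[\boldsymbol{\mathcal{X}}]-\boldsymbol{\Phi}[\boldsymbol{\mathcal{Z}}]\|_{F}^{2}=\sum_{j=1}^{J}\|\boldsymbol{\Phi}_{j}[\boldsymbol{\mathcal{X}}]-\boldsymbol{\Phi}_{j}[\boldsymbol{\mathcal{Z}}]\|_{F}^{2}$, since the Frobenius norm of a tensor is the sum of the Frobenius norms of its frontal slices.

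Combining these two facts, the objective separates into a sum of per-slice objectives. Because $\boldsymbol{\Phi}$ is a bijective linear map on $\mathbb{R}^{M\times I\times J}$, minimizing over $\boldsymbol{\mathcal{X}}$ is equivalent to minimizing freely over the transformed slices $\boldsymbol{W}_{j}:=\boldsymbol{\Phi}_{j}[\boldsymbol{\mathcal{X}}]$ and then applying $\boldsymbol{\Phi}^{-1}$. Each slice subproblem reads $\min_{\boldsymbol{W}_{j}}\|\boldsymbol{W}_{j}\|_{*}+\frac{\rho}{2}\|\boldsymbol{W}_{j}-\boldsymbol{\Phi}_{j}[\boldsymbol{\mathcal{Z}}]\|_{F}^{2}$, which is the standard proximal problem for the matrix nuclear norm with threshold $1/\rho$. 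The final step is to invoke the classical matrix singular value thresholding theorem, which states that the unique minimizer of $\min_{\boldsymbol{W}}\tau\|\boldsymbol{W}\|_{*}+\tfrac{1}{2}\|\boldsymbol{W}-\boldsymbol{A}\|_{F}^{2}$ is $\boldsymbol{U}[\boldsymbol{\Sigma}-\tau]_{+}\boldsymbol{V}^{\top}$ for the SVD $\boldsymbol{A}=\boldsymbol{U}\boldsymbol{\Sigma}\boldsymbol{V}^{\top}$; setting $\tau=1/\rho$ and $\boldsymbol{A}=\boldsymbol{\Phi}_{j}[\boldsymbol{\mathcal{Z}}]$ yields $\hat{\boldsymbol{W}}_{j}=\mathcal{D}_{1/\rho}(\boldsymbol{\Phi}_{j}[\boldsymbol{\mathcal{Z}}])$. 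Applying the inverse transform slice-by-slice then reconstructs $\hat{\boldsymbol{X}}_{j}=\boldsymbol{\Phi}^{-1}[\mathcal{D}_{1/\rho}(\boldsymbol{\Phi}_{j}[\boldsymbol{\mathcal{Z}}])]$, which is exactly Eq.~\eqref{tensor_svt_solution}.

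I expect the main obstacle to be the per-slice matrix thresholding result itself, rather than the decoupling. If one wishes to establish it from scratch instead of citing it, the argument rests on the subdifferential of the matrix nuclear norm: one must verify the first-order optimality condition $\boldsymbol{\Phi}_{j}[\boldsymbol{\mathcal{Z}}]-\hat{\boldsymbol{W}}_{j}\in\frac{1}{\rho}\,\partial\|\hat{\boldsymbol{W}}_{j}\|_{*}$ by splitting the singular values at the threshold $1/\rho$ and exhibiting the residual as a matrix of spectral norm at most one that is orthogonal to the retained row and column spaces. The decoupling step, by contrast, is routine once the separability of the nuclear norm and the Parseval identity for $\boldsymbol{\Phi}$ are in hand, so the essential leverage of the lemma comes entirely from the unitarity of the transform.
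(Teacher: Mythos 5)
Your proposal is correct, and it is essentially the standard argument for this result: the paper itself gives no proof (the lemma is cited from \cite{song2020robust}), and the proof in that reference proceeds exactly as you describe --- separability of the transform-based tensor nuclear norm over transformed frontal slices, Frobenius-norm invariance under the unitary $\boldsymbol{\Phi}$, and the classical matrix singular value thresholding theorem applied slice by slice. The only point worth flagging is that the first step, $\|\boldsymbol{\mathcal{X}}\|_{*}=\sum_{j=1}^{J}\|\boldsymbol{\Phi}_{j}[\boldsymbol{\mathcal{X}}]\|_{*}$, is the \emph{definition} of the tensor nuclear norm in this transform-based setting rather than a derived identity; since the paper never states that definition explicitly, your proof is complete only once that convention is made explicit, after which the decoupling and the per-slice SVT step go through exactly as you wrote them.
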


In Lemma~\ref{tensor_svt}, there are two main steps to achieve tensor singular value thresholding: (1) applying matrix singular value thresholding to the forward unitary transformed matrices, and (2) stacking the inverse unitary transformed results of matrix singular value thresholding to the slices of the estimated tensor. It is not difficult to see that if we impose a unitary transform along a certain dimension of the tensor, then the tensor nuclear norm minimization can be performed by solving nuclear norm minimization problems of ``small'' matrices, making it computationally efficient and scalable to large data. In the literature, invertible linear transforms applied to the tensor completion problem have also been proved to satisfy certain tensor operation property \citep{kernfeld2015tensor,lu2016tensor}.

In Figure~\ref{tensor_svt_explained}, we provide an intuitive schematic for illustrating Lemma~\ref{tensor_svt}. We can see that the tensor singular value thresholding is well-suited for spatiotemporal traffic data that characterized by the dimensions of ``sensor'', ``time point of day'', and ``day''. The corresponding subproblems in the form of matrix allow one to implement singular value thresholding on the unitary transformed matrix for each day while the day-to-day correlation can be preserved by the unitary transform matrix.

\begin{figure}
\centering
  \includegraphics[width=0.8\textwidth]{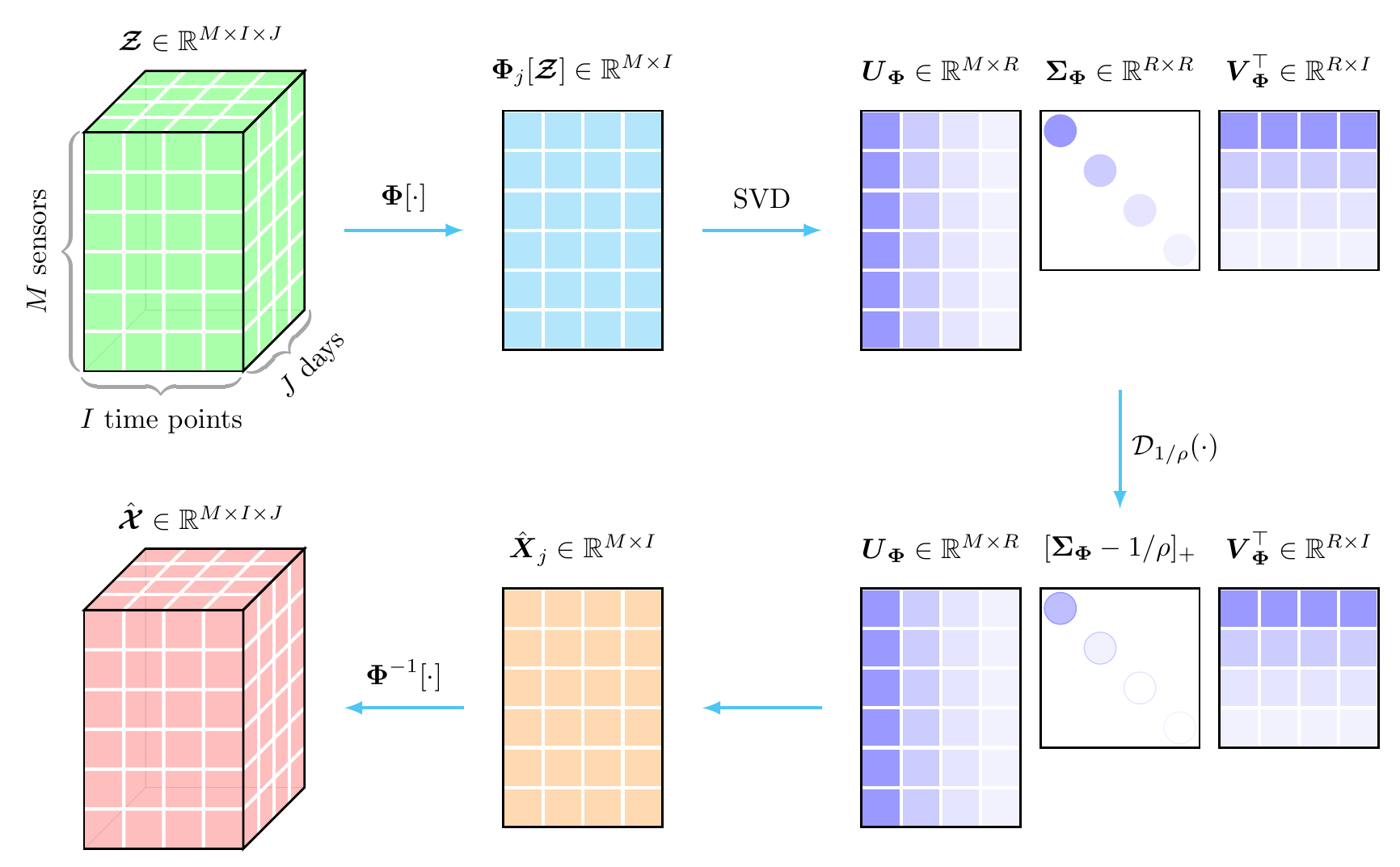}
\caption{Schematic of tensor singular value thresholding ($R=\min\{M,I\}$) with unitary transform. The operators $\boldsymbol{\Phi}[\cdot]$ and $\boldsymbol{\Phi}^{-1}[\cdot]$ denote the forward unitary transform and inverse unitary transform, respectively.}
\label{tensor_svt_explained}
\end{figure}

% Thus, Lemma~\ref{tensor_svt} indeed shows an incremental learning framework that is computationally efficient for large-scale imputation problems.

Therefore, regarding the problem in Eq.~\eqref{sub1_formula}, one can get the closed-form solution by using the tensor singular value thresholding as mentioned in Lemma~\ref{tensor_svt}:
\begin{equation}\label{sub1_solution}
    {\boldsymbol{X}}_{j}^{\ell+1}:=\boldsymbol{\Phi}^{-1}[\mathcal{D}_{1/\rho}(\boldsymbol{\Phi}_{j}[\mathcal{Q}(\boldsymbol{Z}^{\ell})-\boldsymbol{\mathcal{T}}^{\ell}/\rho])],
\end{equation}
where ${\boldsymbol{X}}_{j}^{\ell+1},j=1,\ldots,J$ are the frontal slices of the tensor ${\boldsymbol{\mathcal{X}}}^{\ell+1}$. To get a matrix form solution, we compute $\hat{\boldsymbol{X}}^{\ell+1}=\mathcal{Q}^{-1}({\boldsymbol{\mathcal{X}}}^{\ell+1})\in\mathbb{R}^{M\times(IJ)}$.

\subsection{Computing the Variable $\boldsymbol{Z}$}

The variable $\boldsymbol{Z}$ is in the form of matrix, the formulated quadratic variation on this variable can make the time series smooth. For the fact that $\boldsymbol{\mathcal{X}}=\mathcal{Q}(\boldsymbol{Z})$ shown in Eq.~\eqref{lrtc_ar}, we can rewrite Eq.~\eqref{sub2} with respect to the variable $\boldsymbol{Z}$ as follows,
\begin{equation}\label{sub2_formula}
\begin{aligned}
    \boldsymbol{Z}^{\ell+1}:&=\operatorname{arg}\min_{\boldsymbol{Z}}~\frac{\lambda}{2}\sum_{t}\|\boldsymbol{z}_{t}-\boldsymbol{z}_{t-1}\|_{2}^{2}+\frac{\rho}{2}\left\|\boldsymbol{\mathcal{X}}^{\ell+1}-\mathcal{Q}(\boldsymbol{Z})\right\|_{F}^{2}-\big\langle\mathcal{Q}(\boldsymbol{Z}),\boldsymbol{\mathcal{T}}^{\ell}\big\rangle \\
    &=\operatorname{arg}\min_{\boldsymbol{Z}}~\frac{\lambda}{2}\sum_{t}\|\boldsymbol{z}_{t}-\boldsymbol{z}_{t-1}\|_{2}^{2}+\frac{\rho}{2}\left\|\boldsymbol{Z}-\mathcal{Q}^{-1}(\boldsymbol{\mathcal{X}}^{\ell+1}+\boldsymbol{\mathcal{T}}^{\ell}/\rho)\right\|_{F}^{2}. \\
    % &=\operatorname{arg}\min_{\boldsymbol{Z}}~\lambda\sum_{m}\|\boldsymbol{z}_{m,[h_d+1:]}-\boldsymbol{V}_{m}\boldsymbol{a}_{m}\|_{2}^{2}+\frac{\rho}{2}\|\boldsymbol{\mathcal{X}}^{\ell+1}-\mathcal{Q}(\boldsymbol{Z})\|_{F}^{2}-\big\langle\mathcal{Q}(\boldsymbol{Z}),\boldsymbol{\mathcal{T}}^{\ell}\big\rangle, \\
\end{aligned}
\end{equation}

To solve this optimization problem, we use Lemma~\ref{lemma_tv_solution}.

\begin{lemma}\label{lemma_tv_solution}
For any multivariate time series $\boldsymbol{Z}\in\mathbb{R}^{M\times T}$ consisting of $M$ time series of $T$ consecutive time points, the quadratic variation $\sum_{t=2}^{T}\|\boldsymbol{z}_{t}-\boldsymbol{z}_{t-1}\|_{2}^{2}$ takes the form of
\begin{equation}\label{tv_matrix_form}
    \left\|\left(\boldsymbol{\Psi}_{2}-\boldsymbol{\Psi}_{1}\right)\boldsymbol{Z}^\top\right\|_{F}^{2},
\end{equation}
where $\boldsymbol{\Psi}_{1}=\left[\begin{array}{cc}
  \boldsymbol{I}_{T-1} & \boldsymbol{0}_{T-1} \\
\end{array}\right]\in\mathbb{R}^{(T-1)\times T}$ and $\boldsymbol{\Psi}_{2}=\left[\begin{array}{cc}
  \boldsymbol{0}_{T-1} & \boldsymbol{I}_{T-1} \\
\end{array}\right]\in\mathbb{R}^{(T-1)\times T}$. Note that $\boldsymbol{0}_{T-1}$ is the vector of zeros with length $T-1$, and $\boldsymbol{I}_{T-1}$ is the $T-1$-by-$T-1$ identity matrix.

In this case, an optimal solution to the problem
\begin{equation}
    \min_{\boldsymbol{Z}}~\frac{1}{2}\left\|\left(\boldsymbol{\Psi}_{2}-\boldsymbol{\Psi}_{1}\right)\boldsymbol{Z}^\top\right\|_{F}^{2}+\frac{\alpha}{2}\left\|\boldsymbol{Z}-\boldsymbol{X}\right\|_{F}^{2},
\end{equation}
is given by
\begin{equation}\label{lemma_solution_to_z}
    \hat{\boldsymbol{Z}}=\alpha\boldsymbol{X}\left(\left(\boldsymbol{\Psi}_{2}-\boldsymbol{\Psi}_{1}\right)^\top\left(\boldsymbol{\Psi}_{2}-\boldsymbol{\Psi}_{1}\right)+\alpha\boldsymbol{I}_{T}\right)^{-1}.
\end{equation}
\end{lemma}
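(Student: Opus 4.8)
The plan is to handle the two assertions of the lemma in turn: first verify the matrix reformulation of the quadratic variation in Eq.~\eqref{tv_matrix_form}, then solve the resulting regularized least-squares problem by a standard first-order optimality argument.

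For the reformulation, I would compute the product $(\boldsymbol{\Psi}_{2}-\boldsymbol{\Psi}_{1})\boldsymbol{Z}^\top$ directly. Since $\boldsymbol{Z}^\top\in\mathbb{R}^{T\times M}$ has $\boldsymbol{z}_{t}^\top$ as its $t$th row, the selection matrix $\boldsymbol{\Psi}_{1}=[\,\boldsymbol{I}_{T-1}\;\boldsymbol{0}_{T-1}\,]$ extracts rows $1,\ldots,T-1$, while $\boldsymbol{\Psi}_{2}=[\,\boldsymbol{0}_{T-1}\;\boldsymbol{I}_{T-1}\,]$ extracts rows $2,\ldots,T$. Hence the $t$th row of $(\boldsymbol{\Psi}_{2}-\boldsymbol{\Psi}_{1})\boldsymbol{Z}^\top$ equals $(\boldsymbol{z}_{t+1}-\boldsymbol{z}_{t})^\top$ for $t=1,\ldots,T-1$. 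Expanding the Frobenius norm as the sum of squared row norms then yields $\sum_{t=1}^{T-1}\|\boldsymbol{z}_{t+1}-\boldsymbol{z}_{t}\|_{2}^{2}=\sum_{t=2}^{T}\|\boldsymbol{z}_{t}-\boldsymbol{z}_{t-1}\|_{2}^{2}$, which is exactly the claim. This step is purely a bookkeeping computation.

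For the minimization, I would write $\boldsymbol{D}=\boldsymbol{\Psi}_{2}-\boldsymbol{\Psi}_{1}$ and rewrite the first term via the trace identity $\|\boldsymbol{D}\boldsymbol{Z}^\top\|_{F}^{2}=\operatorname{tr}(\boldsymbol{Z}\boldsymbol{D}^\top\boldsymbol{D}\boldsymbol{Z}^\top)$. Differentiating the objective with respect to $\boldsymbol{Z}$, and using that $\boldsymbol{D}^\top\boldsymbol{D}$ is symmetric, the gradient of the first term is $\boldsymbol{Z}\boldsymbol{D}^\top\boldsymbol{D}$ and that of the second is $\alpha(\boldsymbol{Z}-\boldsymbol{X})$. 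Setting the sum to zero gives $\boldsymbol{Z}(\boldsymbol{D}^\top\boldsymbol{D}+\alpha\boldsymbol{I}_{T})=\alpha\boldsymbol{X}$. Because $\boldsymbol{D}^\top\boldsymbol{D}\succeq\boldsymbol{0}$ and $\alpha>0$, the matrix $\boldsymbol{D}^\top\boldsymbol{D}+\alpha\boldsymbol{I}_{T}$ is positive definite, hence invertible, and right-multiplying by its inverse produces precisely the stated $\hat{\boldsymbol{Z}}$ in Eq.~\eqref{lemma_solution_to_z}.

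Finally, I would confirm that this stationary point is the global minimizer: the objective is the sum of a convex quadratic and the strictly convex term $\tfrac{\alpha}{2}\|\boldsymbol{Z}-\boldsymbol{X}\|_{F}^{2}$, so it is strictly convex and its unique critical point is the global optimum. I do not anticipate a genuine obstacle here; the only step demanding care is the matrix-calculus computation, where one must apply the trace and gradient identities correctly and exploit the symmetry of $\boldsymbol{D}^\top\boldsymbol{D}$ so that no stray transpose survives in the normal equation.
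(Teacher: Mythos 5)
Your proposal is correct and follows essentially the same route as the paper's proof: rewrite the quadratic term via the trace identity, set the matrix gradient to zero using the symmetry of $\boldsymbol{\Psi}^\top\boldsymbol{\Psi}$, and solve the resulting normal equation for $\hat{\boldsymbol{Z}}$. Your additions—explicitly checking the row-selection identity for Eq.~\eqref{tv_matrix_form}, the invertibility of $\boldsymbol{\Psi}^\top\boldsymbol{\Psi}+\alpha\boldsymbol{I}_{T}$, and global optimality via strict convexity—are steps the paper leaves implicit, and they only strengthen the argument.
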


\begin{proof}
Suppose $\boldsymbol{\Psi}=\boldsymbol{\Psi}_2-\boldsymbol{\Psi}_1$ and
\begin{equation}
    f(\boldsymbol{Z})=\frac{1}{2}\left\|\boldsymbol{\Psi}\boldsymbol{Z}^\top\right\|_{F}^{2}+\frac{\alpha}{2}\left\|\boldsymbol{Z}-\boldsymbol{X}\right\|_{F}^{2}=\frac{1}{2}\operatorname{tr}\left(\boldsymbol{Z}\boldsymbol{\Psi}^\top\boldsymbol{\Psi}\boldsymbol{Z}^\top\right)+\frac{\alpha}{2}\operatorname{tr}\left((\boldsymbol{Z}-\boldsymbol{X})(\boldsymbol{Z}-\boldsymbol{X})^\top\right),
\end{equation}
where $\operatorname{tr}(\cdot)$ denotes the trace of matrix.

Let $\frac{\partial f(\boldsymbol{Z})}{\partial \boldsymbol{Z}}$ to be $\boldsymbol{0}$:
\begin{equation}
    \frac{\partial f(\boldsymbol{Z})}{\partial \boldsymbol{Z}}=\boldsymbol{Z}(\boldsymbol{\Psi}^\top\boldsymbol{\Psi}+\alpha\boldsymbol{I}_{T})-\alpha\boldsymbol{X}=\boldsymbol{0},
\end{equation}
then, the least-square solution is $\hat{\boldsymbol{Z}}=\alpha\boldsymbol{X}(\boldsymbol{\Psi}^\top\boldsymbol{\Psi}+\alpha\boldsymbol{I}_{T})^{-1}$.
\end{proof}

According to Lemma~\ref{lemma_tv_solution}, one can rewrite the quadratic variation among time snapshots (i.e., $\boldsymbol{z}_{t}$'s) in the form of matrix where $\boldsymbol{\Psi}_{1}$ and $\boldsymbol{\Psi}_{2}$ are fixed parameters. As shown in Eq.~\eqref{lemma_solution_to_z}, the optimization problem has a least-square solution. In light of the above, the solution to Eq.~\eqref{sub2_formula} is
\begin{equation}\label{sub2_solution}
    \boldsymbol{Z}^{\ell+1}:=\frac{\rho}{\lambda}\mathcal{Q}^{-1}(\boldsymbol{\mathcal{X}}^{\ell+1}+\boldsymbol{\mathcal{T}}^{\ell}/\rho)\left(\left(\boldsymbol{\Psi}_{2}-\boldsymbol{\Psi}_{1}\right)^\top\left(\boldsymbol{\Psi}_{2}-\boldsymbol{\Psi}_{1}\right)+\frac{\rho}{\lambda}\boldsymbol{I}_{T}\right)^{-1},
\end{equation}
where the auxiliary variables $\boldsymbol{\Psi}_{1}$ and $\boldsymbol{\Psi}_{2}$ are as same as the ones for Eq.~\eqref{tv_matrix_form}. In fact, the least-square solution in this case might involve the expensive inverse operation on the $T$-by-$T$ matrix because $T$ is a possibly large value. Therefore, it is important to define Eq.~\eqref{sub2_solution} as a sparse linear equation system.

\subsection{Solution Algorithm}

We propose a fast implementation for LSTC-Tubal as shown in Algorithm~\ref{imputer}, which is well-suited to spatiotemporal data imputation problems. In fact, this algorithm does not have too many parameters. Here, the most basic parameter $\rho$ controls the whole ADMM and the tensor singular value thresholding process. The parameter $\lambda$ is a trade-off between tensor nuclear norm and quadratic variation. According to Eq.~\eqref{sub2_solution}, it can be typically set as $\lambda=c\cdot\rho$ where if $c=1$, then it implies that these two parts have the same importance in the result. In this algorithm, some hidden settings are also needed to discuss. We initialize unitary transform matrix $\boldsymbol{\Phi}$ by the left eigenvectors of $\mathcal{Q}(\boldsymbol{{Z}})_{(3)}$. In the iterative process, we update $\boldsymbol{\Phi}$ by the left eigenvectors of $\mathcal{Q}(\boldsymbol{{Z}})_{(3)}-\boldsymbol{\mathcal{T}}_{(3)}/\rho$ for every 10 iterations. At each iteration, recall that the recovered matrix is computed by $\hat{\boldsymbol{X}}^{\ell}=\mathcal{Q}^{-1}(\mathcal{\boldsymbol{X}}^{\ell})$, and if the stopping criteria meets, the algorithm would return the converged $\hat{\boldsymbol{X}}$ as the final result.

%In this algorithm, there are only two parameters, i.e., $\rho$ and $\lambda$, that are required to be prescribed.

\begin{algorithm}[!ht]
\caption{$\text{imputer}(\boldsymbol{Y},\rho,\lambda)$}
\label{imputer}
Initialize $\boldsymbol{\mathcal{T}}^{0}$ as zeros. Set $\mathcal{P}_{\Omega}(\boldsymbol{Z}^{0})=\mathcal{P}_{\Omega}(\boldsymbol{Y})$ and $\ell=0$. \\
\While{not converged}{
Update $\rho$ by $\rho=\min\{1.05\times\rho, \rho_{\text{max}}\}$; \\
\For{$j=1$ \KwTo $J$}{
Update $\boldsymbol{X}_{j}^{\ell+1}$ by Eq.~\eqref{sub1_solution};
}
Update $\boldsymbol{Z}^{\ell+1}$ by Eq.~\eqref{sub2_solution}; \\
Update $\boldsymbol{\mathcal{T}}^{\ell+1}$ by Eq.~\eqref{sub3}; \\
Transform the observation information by letting $\mathcal{P}_{\Omega}(\boldsymbol{Z}^{\ell+1})=\mathcal{P}_{\Omega}(\boldsymbol{Y})$; \\
$\ell:=\ell+1$;
}
\Return recovered matrix $\hat{\boldsymbol{X}}$.
\end{algorithm}

\section{Experiments}\label{experiments}

In this section, we evaluate the imputation performance of the proposed model on some real-world traffic data sets. We measure the model by both imputation accuracy and computational cost.

\subsection{Traffic Data Sets}

To show the advantages of LSTC-Tubal for handling large-scale traffic data, we choose two large-scale data sets collected by the California department of transportation through their Performance Measurement System (PeMS)\footnote{The data sets are available at \url{https://doi.org/10.5281/zenodo.3939792}.} and two other publicly available data sets, the London movement speed data set and Guangzhou traffic speed data set, for the experiments:
\begin{itemize}
    \item\textbf{PeMS-4W data set}: This data set contains freeway traffic speed collected from 11160 traffic measurement sensors over 4 weeks (the first 4 weeks in the year of 2018) with a 5-minute time resolution (288 time intervals per day) in California, USA. It can be arranged in a matrix of size $11160\times 8064$ or a tensor of size $11160\times 288\times 28$ according to the spatial and temporal dimensions. Note that this data set contains about 90 million observations.
    \item\textbf{PeMS-8W data set}: This data set contains freeway traffic speed collected from 11160 traffic measurement sensors over 8 weeks (the first 8 weeks in the year of 2018) with a 5-minute time resolution (288 time intervals per day) in California, USA. It can be arranged in a matrix of size $11160\times 16128$ or a tensor of size $11160\times 288\times 56$ according to the spatial and temporal dimensions. Note that this data set contains about 180 million observations.
    \item\textbf{London-1M data set}: This is London movement speed data set that created by Uber movement project\footnote{\url{https://movement.uber.com}}. This data set includes the average speed on a given road segment for each hour of each day over a whole month (April 2019). In this data set, there are about 220,000 road segments. Note that this data sets only includes the hours or a road segment with at least 5 unique trips in that hour. There are up to 73.09\% missing values and most missing values occur during the night. We choose the subset of this raw data set and build a time series matrix of size $35912\times 720$ (or a tensor of size $35912\times 24\times 30$) in which each time series has at least 70\% observations.
    \item\textbf{Guangzhou-2M data set}\footnote{The data set is available at \url{https://doi.org/10.5281/zenodo.1205229}.}: This traffic speed data set was collected from 214 road segments over two months (61 days from August 1 to September 30, 2016) with a 10-minute resolution (144 time intervals per day) in Guangzhou, China. It can be arranged in a matrix of size $214\times 8784$ or a tensor of size $214\times 144\times 61$.
\end{itemize}

It is not difficult to see that PeMS-4W, PeMS-8W, and London-1M data sets are both large-scale and high-dimensional. In what follows, we create two missing patterns, i.e., random missing (RM) and non-random missing (NM), which are same as in our prior work~\citep{chen2020anonconvex}. According to the mechanism of RM and NM patterns, we mask a certain amount of observations as missing values (i.e., 30\%, 70\%) in these data sets, and the remaining partial observations are input data. To assess the imputation performance, we use the actual values of the masked missing entries as the ground truth to compute MAPE and RMSE:
\begin{equation}
    \text{MAPE}=\frac{1}{n}\sum_{i=1}^{n}\left|\frac{y_i-\hat{y}_i}{y_i}\right|\times 100,\quad\text{RMSE}=\sqrt{\frac{1}{n}\sum_{i=1}^{n}(y_i-\hat{y}_{i})^2},
\end{equation}
where $y_i,\hat{y}_{i},i=1,\ldots,n$ are the actual values and estimated/imputed values, respectively.

\subsection{Baseline Imputation Models}

There are few competing methods for comparison in the imputation experiments on the large-scale traffic data sets because most developed imputation algorithms are not suitable for dealing with such ``big'' data. Thus, we compare the proposed LSTC-Tubal model with the following baseline models:
\begin{itemize}
    \item BPMF: Bayesian Probabilistic Matrix Factorization \citep{Salakhutdinov2008bayesian}. This is a fully Bayesian model of the standard matrix factorization using the Markov chain Monte Carlo (MCMC) algorithm.
    \item BGCP: Bayesian Gaussian CP decomposition \citep{chen2019abayesian}.
    \item BATF: Bayesian Augmented Tensor Factorization \citep{chen2019missing}. This is a fully Bayesian model that integrates global mean value, bias vectors, and factor matrices.
    % \item LRMC: Low-Rank Matrix Completion \citep{xxx}. This is a standard matrix completion model with learning objective being matrix nuclear norm.
    \item HaLRTC: High-accuracy Low-Rank Tensor Completion \citep{liu2013tensor}. This is a classical LRTC model which uses weighted sum of  nuclear norm from unfolded matrices.
    \item LRTC-TNN: Low-Rank Tensor Completion with Truncated Nuclear Norm minimization \citep{chen2020anonconvex}. This model is built on HaLRTC where the weighted tensor nuclear norm is replaced by multiple truncated nuclear norms with a truncation rate parameter. It has been shown to be superior to TRMF \citep{yu2016temporal}, BTMF \citep{chen2019bayesian}, and the two other baseline models also mentioned in this article, i.e., BGCP \citep{chen2019abayesian} and HaLRTC \citep{liu2013tensor}.
    \item LSTC-DCT: We also compare the LSTC model with discrete cosine transform (DCT). When removing the quadratic variation in LSTC-DCT, i.e., $\lambda=0$, this model is indeed tensor nuclear norm minimization with DCT (i.e., TNN-DCT) proposed by \cite{lu2019lowrank}. Note that DCT is a special case of linear unitary transform. When comparing LSTC-DCT to the proposed LSTC-Tubal model, it is possible to see the importance of unitary transform in LSTC-Tubal.
\end{itemize}

To make the comparison fair, we use the following convergence criteria for the LRTC/LSTC models:
\begin{equation}
    \frac{\|\hat{\boldsymbol{\mathcal{X}}}^{\ell+1}-\hat{\boldsymbol{\mathcal{X}}}^{\ell}\|_{F}^{2}}{\|\mathcal{P}_{\Omega}(\boldsymbol{\mathcal{Y}})\|_{F}^{2}}<\epsilon\quad\text{or}\quad\frac{\|\hat{\boldsymbol{X}}^{\ell+1}-\hat{\boldsymbol{X}}^{\ell}\|_{F}^{2}}{\|\mathcal{P}_{\Omega}(\boldsymbol{Y})\|_{F}^{2}}<\epsilon,
\end{equation}
where $\hat{\boldsymbol{\mathcal{X}}}^{\ell+1}$ and $\hat{\boldsymbol{\mathcal{X}}}^{\ell}$ denote the recovered tensors at the $\ell+1$th iteration and $\ell$th iteration, respectively. $\hat{\boldsymbol{X}}^{\ell+1}$ and $\hat{\boldsymbol{X}}^{\ell}$ denote the recovered matrices at the $\ell+1$th iteration and $\ell$th iteration, respectively. In the following experiments, we set $\epsilon=1\times10^{-3}$.

For the BPMF, BGCP, and BATF models, we set the low rank to 10 on both PeMS data sets due to the possibly high computational cost underlying the greater low rank. For all Gibbs sampling based Bayesian models, we set the number of burn-in iterations and Gibbs iterations as 1000 and 200, respectively. In particular, we discuss LSTC-Tubal in the cases of $\lambda=0$ and $\lambda\neq 0$ because putting LSTC-Tubal ($\lambda=0$) and LSTC-Tubal ($\lambda\neq 0$) together can help evaluate the importance of quadratic variations in the framework. Through cross validation, we evaluate both LSTC-DCT and LSTC-Tubal models with the following parameters: 1) For PeMS-4W and PeMS-8W data, we set $\lambda=0.001\times\rho$ where $\rho=0.001$ (RM) and $\rho=0.0001$ (NM); 2) For London-1M data, we set $\lambda=0.001\times\rho$ and $\rho=0.001$; 3) For Guangzhou-2M data, we set we set $\lambda=0.5\times\rho$ and $\rho=0.002$. The Python code is available at \url{https://github.com/xinychen/transdim}.

\subsection{Imputation Results}

\subsubsection{Evaluation on PeMS-4W and PeMS-8W Data}

Table~\ref{table1} shows the imputation performance of LSTC-Tubal and its competing models on PeMS-4W and PeMS-8W data. For RM scenarios on both PeMS-4W and PeMS-8W data sets, the proposed LSTC-Tubal model achieves high accuracy which is close to the best performance achieved by LRTC-TNN. For NM scenarios, LSTC-Tubal is inferior to the LRTC-TNN model due to whole days missing in each time series. By comparing with HaLRTC, we can see that LSTC-Tubal ($\lambda=0$) performs consistently better in all missing scenarios. This clearly shows that tensor nuclear norm minimization with unitary transform is superior to weighted tensor nuclear norm minimization. The comparison results between LSTC-DCT and LSTC-Tubal indicate that data-driven unitary transform is more capable of characterizing day-to-day correlation than discrete cosine transform.

\begin{table}[!ht]
\caption{Performance comparison (in MAPE/RMSE) for RM and NM data imputation tasks on California PeMS-4W and PeMS-8W traffic speed data.}
\label{table1}
\centering
\footnotesize
\begin{tabular}{l|rrrr|rrrr}
\toprule
& \multicolumn{4}{c|}{PeMS-4W} & \multicolumn{4}{c}{PeMS-8W} \\
\cmidrule(lr){2-5}
\cmidrule(lr){6-9}
& 30\%, RM & 70\%, RM & 30\%, NM & 70\%, NM & 30\%, RM & 70\%, RM & 30\%, NM & 70\%, NM \\
\midrule
% BPMF & 3.41/3.06 & 3.68/3.29 & 5.18/4.46 & 5.77/5.09 & 3.88/3.43 & 4.10/3.62 & 5.33/4.56 & 5.51/4.72 \\
BPMF & 4.86/4.21 & 4.88/4.22 & 5.18/4.46 & 5.76/5.09 & 5.16/4.43 & 5.18/4.44 & 5.33/4.56 & 5.51/4.72 \\
% BTMF & 5.12/4.47 & 5.14/4.48 &  \\
% BGCP & 5.06/4.32 & 5.05/4.32 & 5.45/4.62 & 8.05/10.35 & 5.45/4.60 & 5.43/4.59 & 5.52/4.67 & 5.72/4.84 \\
BGCP & 5.03/4.31 & 5.02/4.31 & 5.38/4.59 & 6.10/5.50 & 5.28/4.52 & 5.27/4.51 & 5.47/4.66 & 5.63/4.82 \\
BATF & 4.91/4.24 & 4.92/4.25 & 5.31/4.55 & 6.33/5.93 & 5.20/4.47 & 5.22/4.47 & 5.41/4.63 & 5.65/4.88 \\
HaLRTC & 1.98/1.73 & 2.84/2.49 & 5.24/4.19 & 7.17/5.19 & 2.07/1.81 & 3.17/2.74 & 4.79/3.98 & 6.06/4.67  \\
LRTC-TNN & \textbf{1.67}/\textbf{1.55} & \textbf{2.32}/\textbf{2.13} & \textbf{4.62}/\textbf{3.94} & \textbf{5.47}/\textbf{4.71} & 1.81/1.66 & 2.57/\textbf{2.29} & \textbf{4.27}/\textbf{3.69} & \textbf{5.31}/\textbf{4.52} \\
% LATC-TNN & \textbf{1.65}/1.58 & 2.34/2.22 & \textbf{4.46}/\textbf{3.88} & 5.56/\textbf{4.68} & \textbf{1.72}/1.64 & 2.50/2.35 & 4.28/3.75 & \textbf{5.31}/\textbf{4.45} \\
% LATC-DCT & 1.69/1.58 & 2.41/2.23 & 6.22/4.94 & 7.35/5.60 & \textbf{1.72}/\textbf{1.61} & 2.45/\textbf{2.27} & 5.69/4.69 & 6.57/5.18 \\
% LATC-Tubal ($\lambda=0$) & 1.71/1.61 & 2.46/2.26 & 5.59/4.52 & 6.59/5.07 & 1.74/1.63 & 2.51/2.30 & 5.28/4.42 & 5.97/4.77 \\
% LATC-Tubal ($\lambda\neq 0$) & 1.69/1.59 & 2.40/2.23 & 5.59/4.52 & 6.60/5.07 & \textbf{1.72}/\textbf{1.61} & \textbf{2.44}/\textbf{2.27} & 5.28/4.42 & 5.98/4.77 \\
% \midrule
LSTC-DCT & 1.72/1.61 & 2.50/2.28 & 6.22/4.94 & 7.35/5.60 & \textbf{1.75}/\textbf{1.64} & 2.54/2.32 & 5.69/4.69 & 6.57/5.18 \\
LSTC-Tubal ($\lambda=0$) & 1.72/1.61 & 2.47/2.27 & 5.59/4.52 & 6.59/5.07 & \textbf{1.75}/\textbf{1.64} & \textbf{2.51}/2.31 & 5.28/4.42 & 5.97/4.77 \\
LSTC-Tubal ($\lambda\neq 0$) & 1.72/1.61 & 2.47/2.27 & 5.59/4.52 & 6.60/5.07 & \textbf{1.75}/\textbf{1.64} & \textbf{2.51}/2.31 & 5.28/4.42 & 5.97/4.77 \\
\bottomrule
\end{tabular}
\end{table}

Of results in Table~\ref{table1}, LRTC-TNN produces comparable imputation accuracy. However, observing the running time of these imputation models in Table~\ref{table2}, it is not difficult to conclude that LRTC-TNN is not well-suited to these large-scale imputation tasks due to the extremely high computational cost. Table~\ref{table2} clearly shows that LSTC-Tubal with unitary transformed tensor nuclear norm minimization scheme is the most computationally efficient model. Therefore, from both Table~\ref{table1} and \ref{table2}, the results suggest that the proposed LSTC-Tubal model is an efficient solution to large-scale traffic data imputation while still maintaining an imputation performance close to the state-of-the-art models.

\begin{table}[!ht]
\caption{Running time (in minutes) of imputation models on California PeMS-4W and PeMS-8W traffic speed data. We also report the number of iterations when the algorithm has converged.}
\label{table2}
\centering
\footnotesize
% \scriptsize
\begin{tabular}{l|rrrr|rrrr}
\toprule
& \multicolumn{4}{c|}{PeMS-4W} & \multicolumn{4}{c}{PeMS-8W} \\
\cmidrule(lr){2-5}
\cmidrule(lr){6-9}
& 30\%, RM & 70\%, RM & 30\%, NM & 70\%, NM & 30\%, RM & 70\%, RM & 30\%, NM & 70\%, NM \\
\midrule
% BPMF (150) & 184.33 & 200.06 & 25.45 & 28.74 & 285.92 & 289.18 & 37.84 & 46.77 \\
% {\color{red}BPMF} & 728.00 & 741.11 & 107.31 & 120.08 & 1165.15 & 1156.77 & 166.14 & 213.70 \\
BPMF & 100.51 & 102.76 & 103.96 & 102.66 & 194.63 & 192.17 & 190.65 & 192.39 \\
% {\color{red}BTMF} & 118.00 & 123.00 & \\
% BGCP (150) & 43.60 & 49.25 & 43.59 & 49.09 & 80.12 & 95.02 & 85.12 & 94.03  \\
% {\color{red}BGCP} & 174.24 & 204.48 & 174.26 & 212.79 & 346.48 & 403.9 & 347.49 & 414.21  \\
BGCP & 287.48 & 286.09 & 284.77 & 285.08 & 585.60 & 553.88 & 563.42 & 551.31 \\
BATF & 473.97 & 444.98 & 474.93 & 444.16 & 934.83 & 864.08 & 947.03 & 906.97 \\
HaLRTC & 88.35 (18) & 215.33 (37) & 176.31 (34) & 321.20 (61) & 245.41 (18) & 505.22 (38) & 473.43 (35) & 785.40 (59) \\
LRTC-TNN & 354.82 (74) & 322.66 (67) & 468.25 (100) & 465.34 (100) & 877.37 (67) & 807.00 (60) & 1302.68 (100) & 1307.53 (100) \\
% LATC-TNN & 238.94 (46) & 234.44 (45) & 143.23 (28) & 159.33 (31) & 578.12 (39) & 638.84 (41) & 383.94 (27) & 439.00 (30) \\
% TNN-DCT & 12.53 (46) & 15.86 (54) & 11.79 (47) & 18.83 (59) & 21.48 (46) & 35.34 (54) & 21.63 (47) & 34.80 (59) \\
LSTC-DCT & 6.67 (14) & 8.91 (20) & 21.52 (47) & 25.44 (59) & \textbf{13.36} (14) & \textbf{18.97} (20) & 44.17 (47) & \textbf{53.45} (58) \\
LSTC-Tubal ($\lambda=0$) & \textbf{6.19} (16) & \textbf{9.61} (27) & \textbf{16.34} (48) & \textbf{23.72} (69) & {14.25} (16) & {25.20} (28) & \textbf{39.11} (48) & {59.60} (70) \\
LSTC-Tubal ($\lambda\neq 0$) & 8.16 (16) & 12.83 (27) & 23.43 (48) & 32.13 (69) & 18.65 (16) & 31.90 (28) & 54.72 (48) & 77.90 (74) \\
\bottomrule
\end{tabular}
\end{table}

We further discuss the imputation of the LSTC-Tubal ($\lambda\neq0$) by providing the heatmap of unitary transform matrices in Figure~\ref{unitary_matrices} and the visualization of time series in Figure~\ref{time_series_curves}. As shown in Figure~\ref{unitary_matrices}(a), the initial unitary transform matrix produced by the partially observed tensor only harnesses the most basic daily patterns where the first few columns can help identify weekday and weekend. In the following iterative process, the patterns underlying unitary transform matrix become more complicated and informative (see Figure~\ref{unitary_matrices}(b-c)). Figure~\ref{time_series_curves} shows that LSTC-Tubal can produce masked time series points accurately by learning from partial observations.

\begin{figure*}[!ht]
\centering
\subfigure[The initialized one.]{
    \centering
    \includegraphics[scale=0.43]{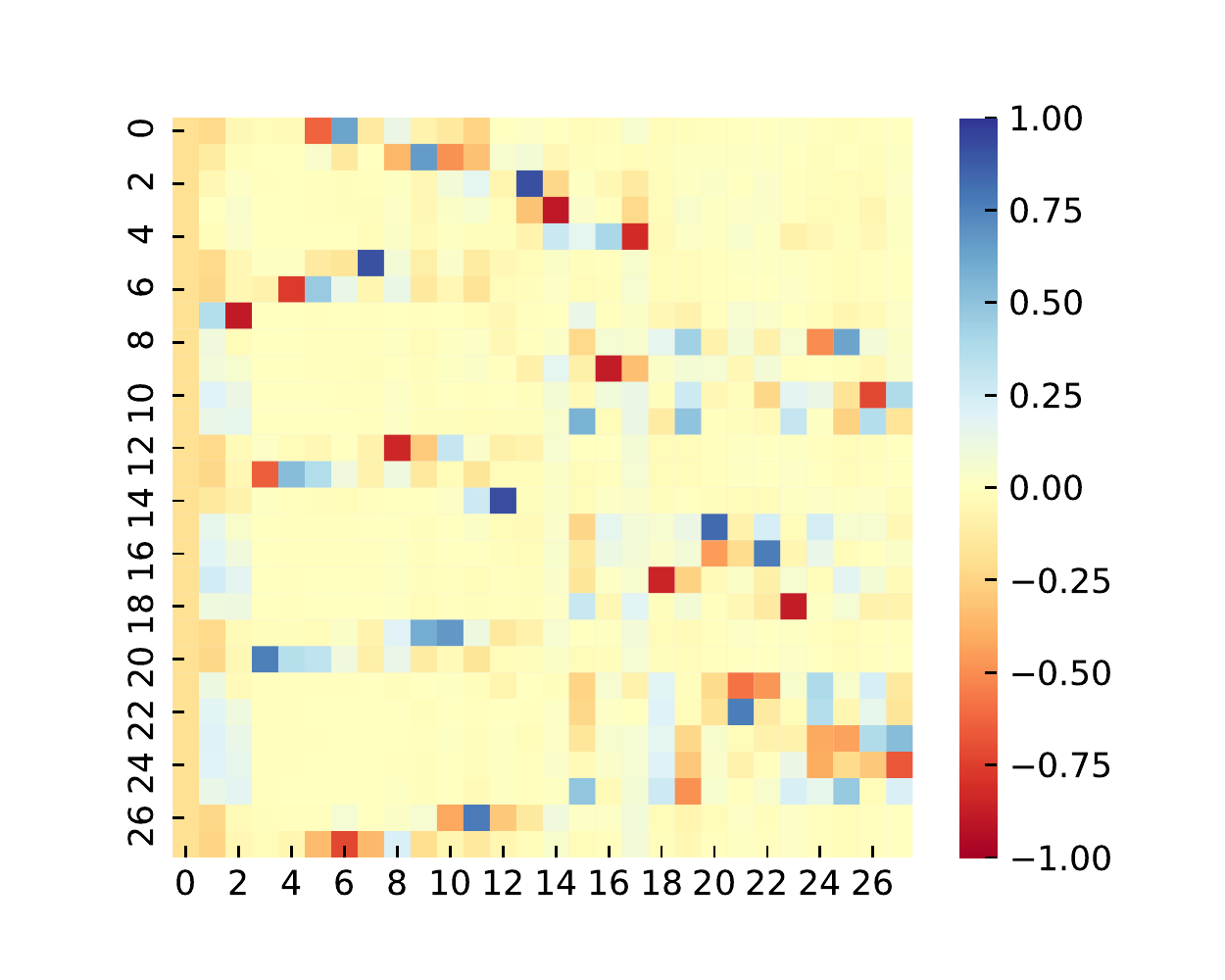}
}
\subfigure[At the 10th iteration.]{
    \centering
    \includegraphics[scale=0.43]{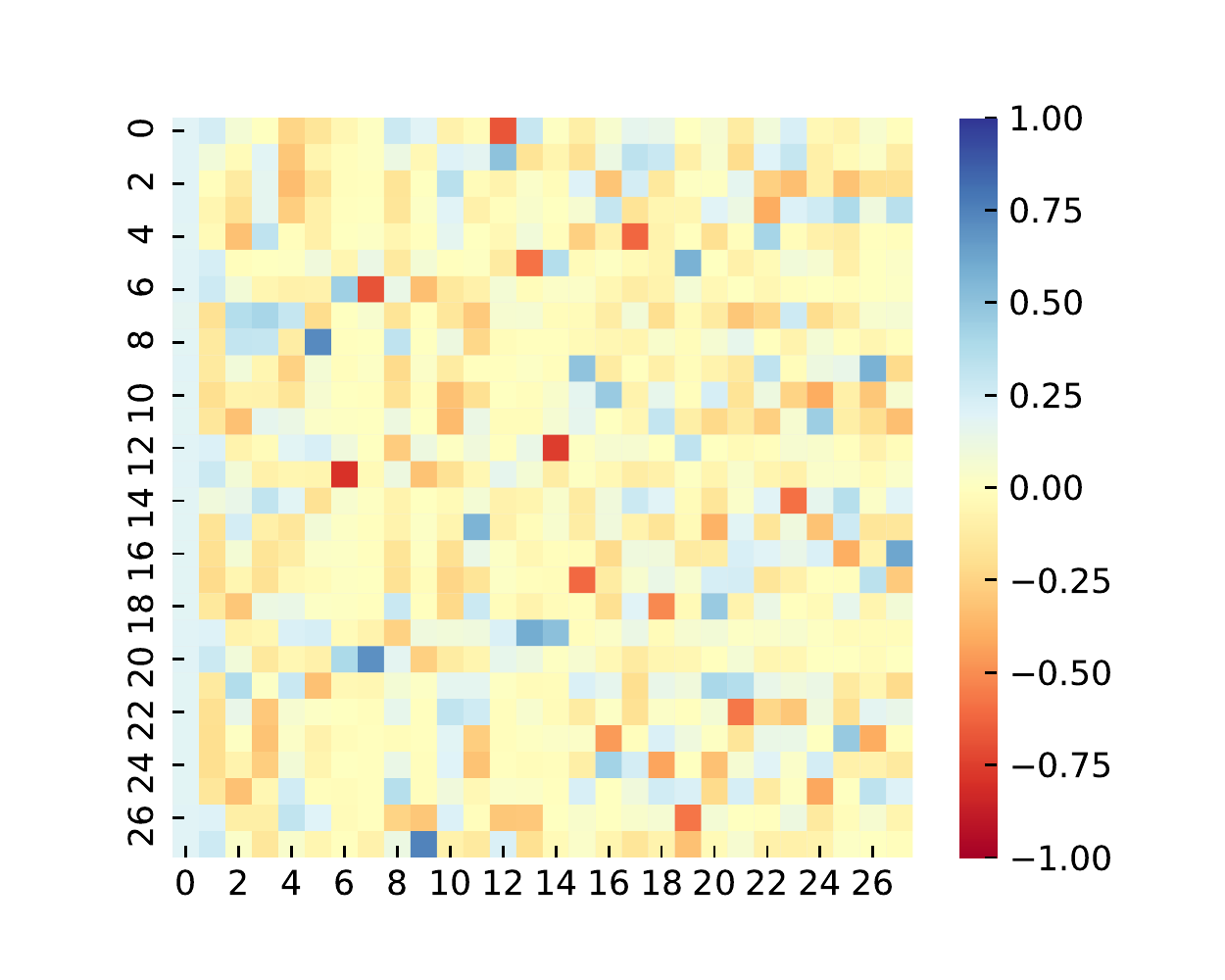}
}
\subfigure[At the 20th iteration.]{
    \centering
    \includegraphics[scale=0.43]{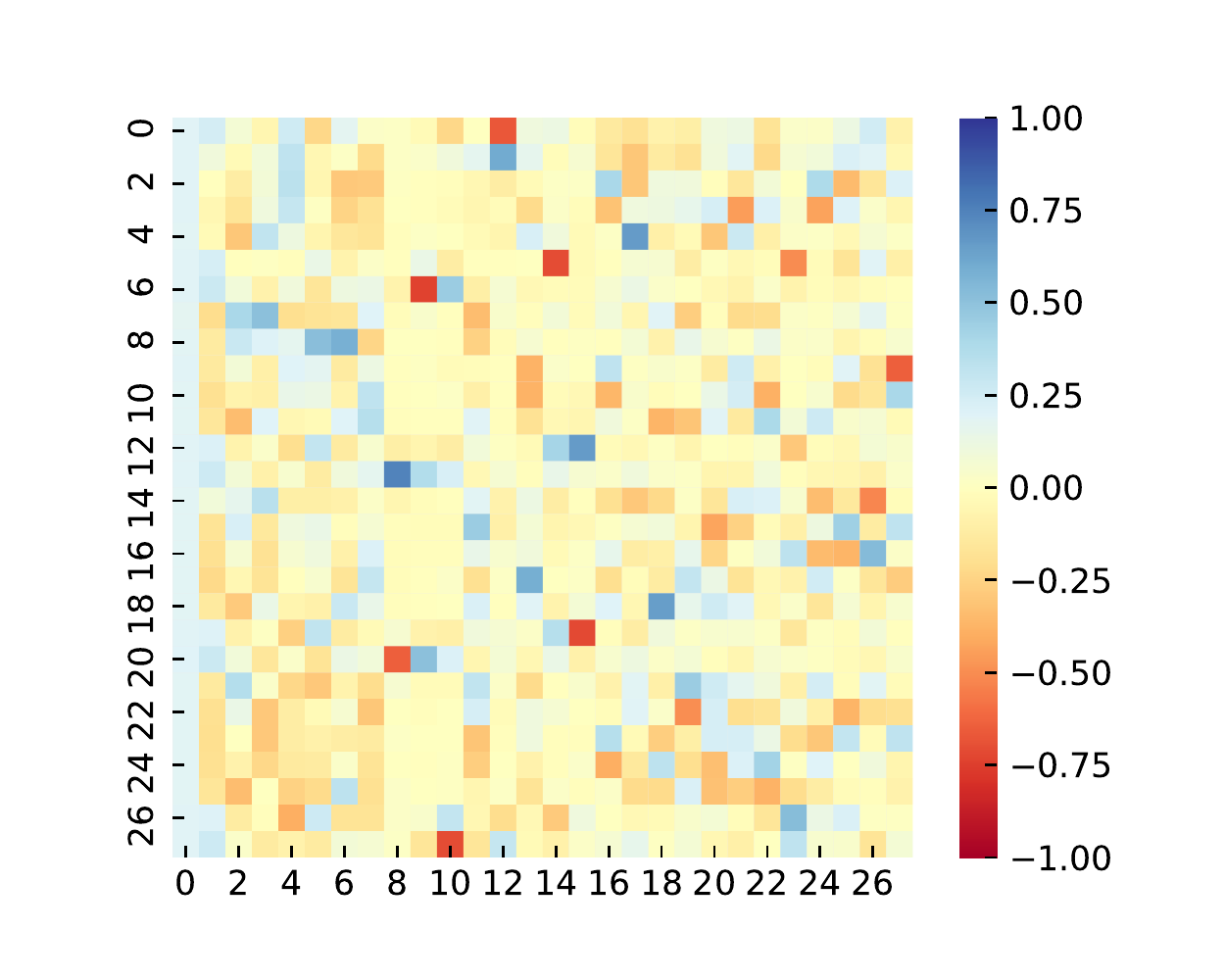}
}
\caption{28-by-28 unitary transform matrices of the PeMS-4W data at the case of 70\% RM scenario.}
\label{unitary_matrices}
\end{figure*}

\begin{figure*}[!ht]
\centering
\subfigure[The 1st time series.]{
    \centering
    \includegraphics[scale=0.4]{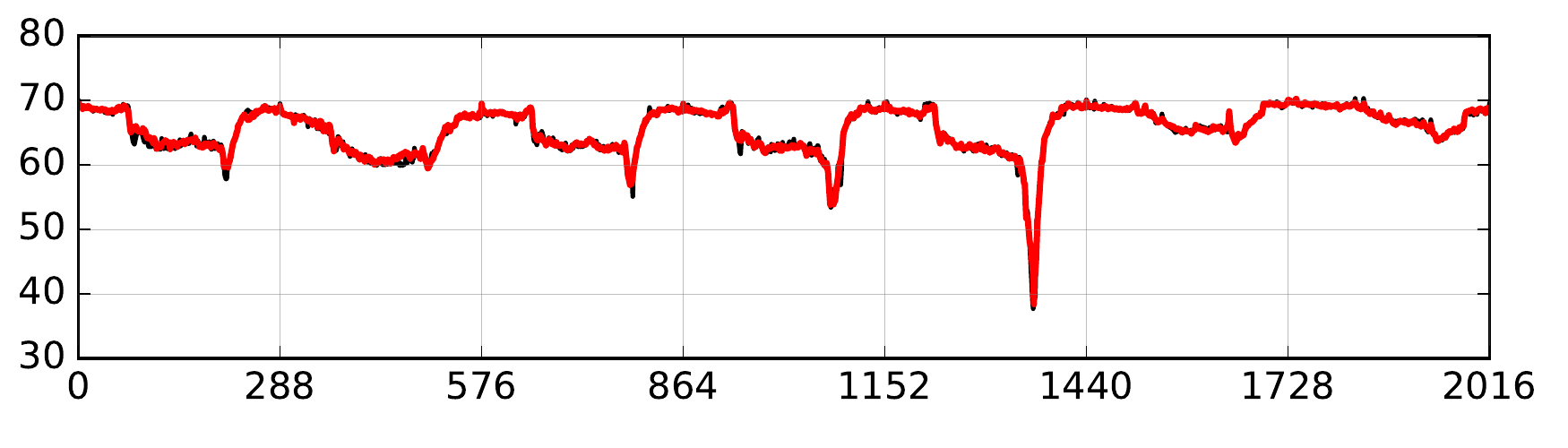}
}
\subfigure[The 2nd time series.]{
    \centering
    \includegraphics[scale=0.4]{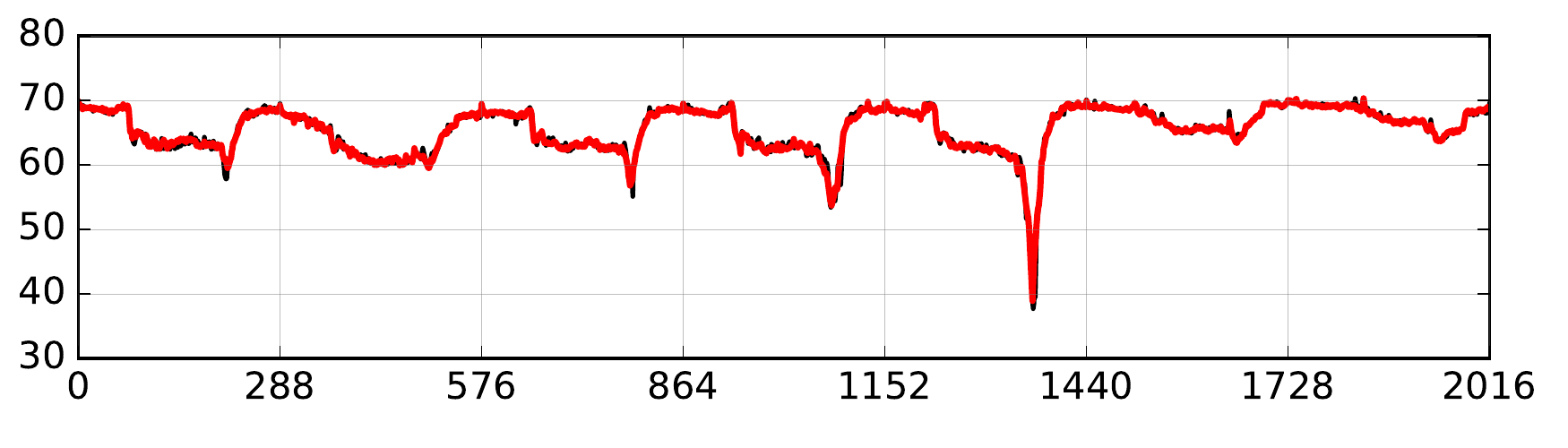}
}
\subfigure[The 3rd time series.]{
    \centering
    \includegraphics[scale=0.4]{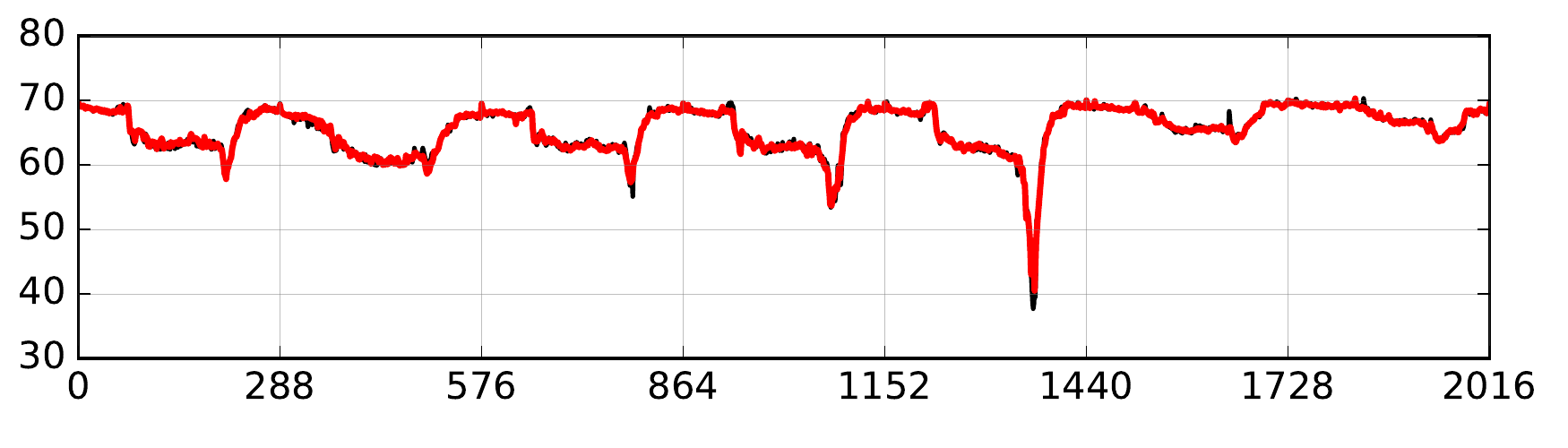}
}
\subfigure[The 4th time series.]{
    \centering
    \includegraphics[scale=0.4]{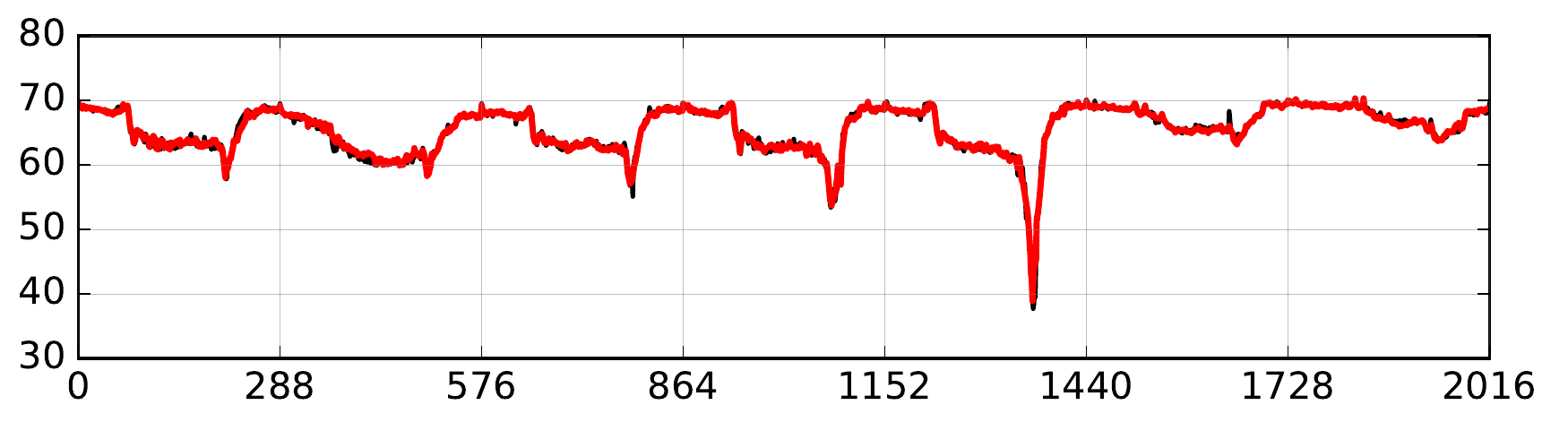}
}
\subfigure[The 5th time series.]{
    \centering
    \includegraphics[scale=0.4]{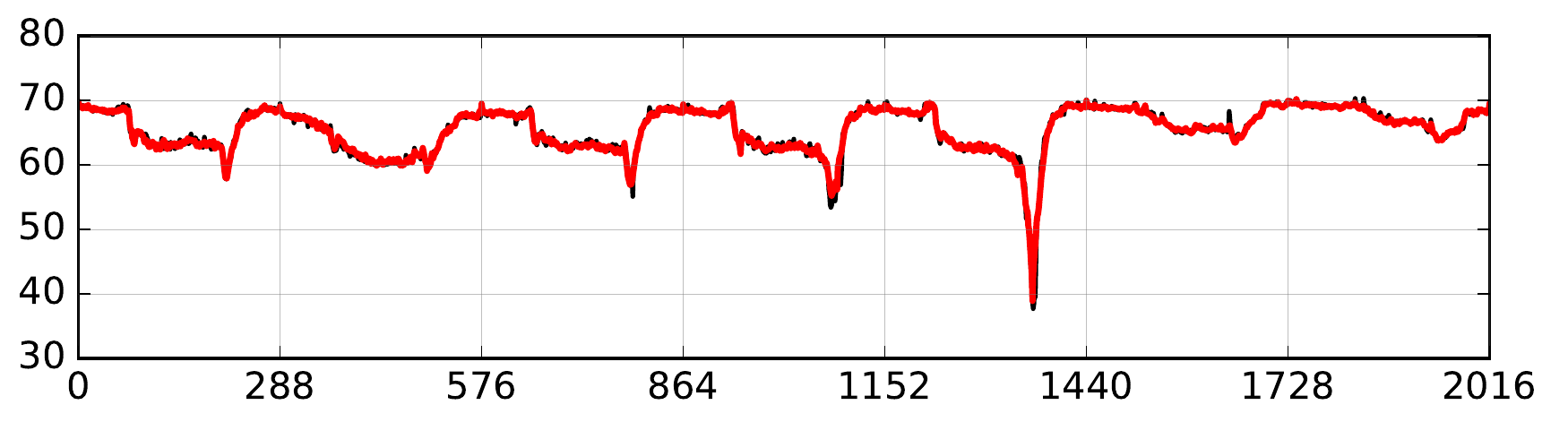}
}
\subfigure[The 6th time series.]{
    \centering
    \includegraphics[scale=0.4]{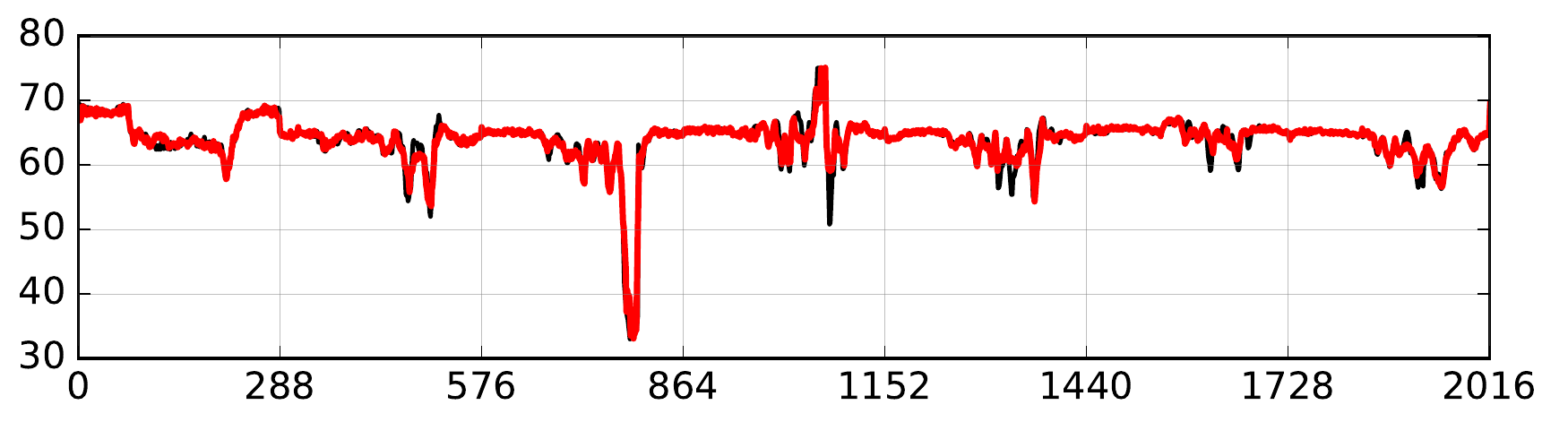}
}
\caption{Estimated time series (red curve) vs. ground truth (black curve) of the LSTC-Tubal model for PeMS-4W data at the case of 70\% RM scenario. Only the data in the first week are selected as examples. The first 6 time series correspond to the the first 6 rows in the data matrix.}
\label{time_series_curves}
\end{figure*}

\subsubsection{Evaluation on London-1M and Guangzhou-2M Data}

Table~\ref{table1_add} shows the imputation performance of LSTC-Tubal and its competing models on London-1M and Guangzhou-2M data sets. Of these results, LSTC-Tubal performs better than HaLRTC and LSTC-DCT. It implies that the LSTC framework with unitary transform is superior to both tensor nuclear norm minimization in HaLRTC and LSTC with discrete cosine transform. On London-1M data, LSTC-Tubal achieves competitive results as the best one reported by LRTC-TNN. On Guangzhou-2M data, LSTC-Tubal is approaching the best performance achieved by LRTC-TNN. In the NM scenario, the results of LSTC-Tubal are also comparable to the baseline models. Notably, quadratic variation smoothing can help improve the imputation performance of LSTC-Tubal.

\begin{table}[!ht]
\caption{Performance comparison (in MAPE/RMSE) for RM and NM data imputation tasks on London 1-month movement speed data (London-1M) and Guangzhou 2-month traffic speed data (Guangzhou-2M).}
\label{table1_add}
\centering
\footnotesize
\begin{tabular}{l|rrrr|rrrr}
\toprule
& \multicolumn{4}{c|}{London-1M} & \multicolumn{4}{c}{Guangzhou-2M} \\
\cmidrule(lr){2-5}
\cmidrule(lr){6-9}
& 30\%, RM & 70\%, RM & 30\%, NM & 70\%, NM & 30\%, RM & 70\%, RM & 30\%, NM & 70\%, NM \\
\midrule
BPMF & 9.11/2.20 & \textbf{9.40}/\textbf{2.27} & \textbf{9.39}/2.28 & \textbf{9.98}/\textbf{2.42} & 9.57/4.07 & 10.55/4.44 & 10.50/4.39 & 11.26/4.76 \\
BGCP & 9.19/2.24 & 9.44/2.30 & 9.46/2.31 & 9.99/2.44 & 8.33/3.59 & 8.54/3.69 & 10.47/4.37 & 10.74/4.66 \\
BATF & 9.17/2.23 & 9.45/2.30 & 9.44/2.31 & 9.99/2.44 & 8.29/3.58 & 8.53/3.69 & 10.39/4.34 & 10.75/4.55 \\
HaLRTC & 8.95/2.13 & 9.76/2.34 & 9.62/2.29 & 11.08/2.67 & 8.50/3.48 & 10.44/4.18 & 10.82/4.35 & 12.60/4.94 \\
LRTC-TNN & \textbf{8.87}/\textbf{2.12} & 9.63/2.32 & 9.46/\textbf{2.26} & 10.28/2.49 & \textbf{7.02}/\textbf{3.00} & \textbf{8.42}/\textbf{3.60} & \textbf{9.65}/\textbf{4.09} & \textbf{10.15}/\textbf{4.30} \\
LSTC-DCT & 9.41/2.27 & 10.15/2.45 & 10.50/2.52 & 11.74/2.85 & 7.73/3.25 & 9.50/3.93 & 11.64/4.72 & 12.64/5.13 \\
LSTC-Tubal ($\lambda=0$) & 9.14/2.20 & 9.73/2.34 & 9.87/2.37 & 11.00/2.66 & 7.50/3.18 & 8.84/3.71 & 10.78/4.42 & 11.19/4.56 \\
LSTC-Tubal ($\lambda\neq 0$) & 9.14/2.20 & 9.74/2.34 & 9.87/2.37 & 11.00/2.66 & {7.33}/{3.11} & {8.60}/{3.62} & 10.72/4.39 & 11.20/4.56 \\
\bottomrule
\end{tabular}
\end{table}

In particular, we discuss the imputation performance of the selected models, including BPMF, BGCP, HaLRTC, LRTC-TNN, and LSTC-Tubal, by analyzing the deviations between the ground truth data and imputed data. Figure~\ref{residual_distribution} shows the probability distribution of residuals $y_{i}-\hat{y}_{i}$ (i.e., ground truth minus imputed data) on the London-1M and Guangzhou-2M data sets. In Figure~\ref{residual_distribution}(a), these models have similar residual distribution and this indication is consistent with the imputation results as shown in Table~\ref{table1_add}. In Figure~\ref{residual_distribution}(b), the highest distribution peak around 0 is achieved by LSTC-Tubal, and it implies that there are more imputed values have ``zero errors" (i.e., $y_{i}-\hat{y}_{i}\approx0$). In this case, LSTC-Tubal performs better than baseline models.

% By observing Figure~\ref{residual_distribution}(a), although the results achieved by BPMF and BGCP are very close, BGCP imputes missing values more accurately than BPMF because BGCP produces more residuals around 0 than BPMF.

\begin{figure*}[!ht]
\centering
\subfigure[On 70\% RM London-1M data.]{
    \centering
    \includegraphics[scale=0.45]{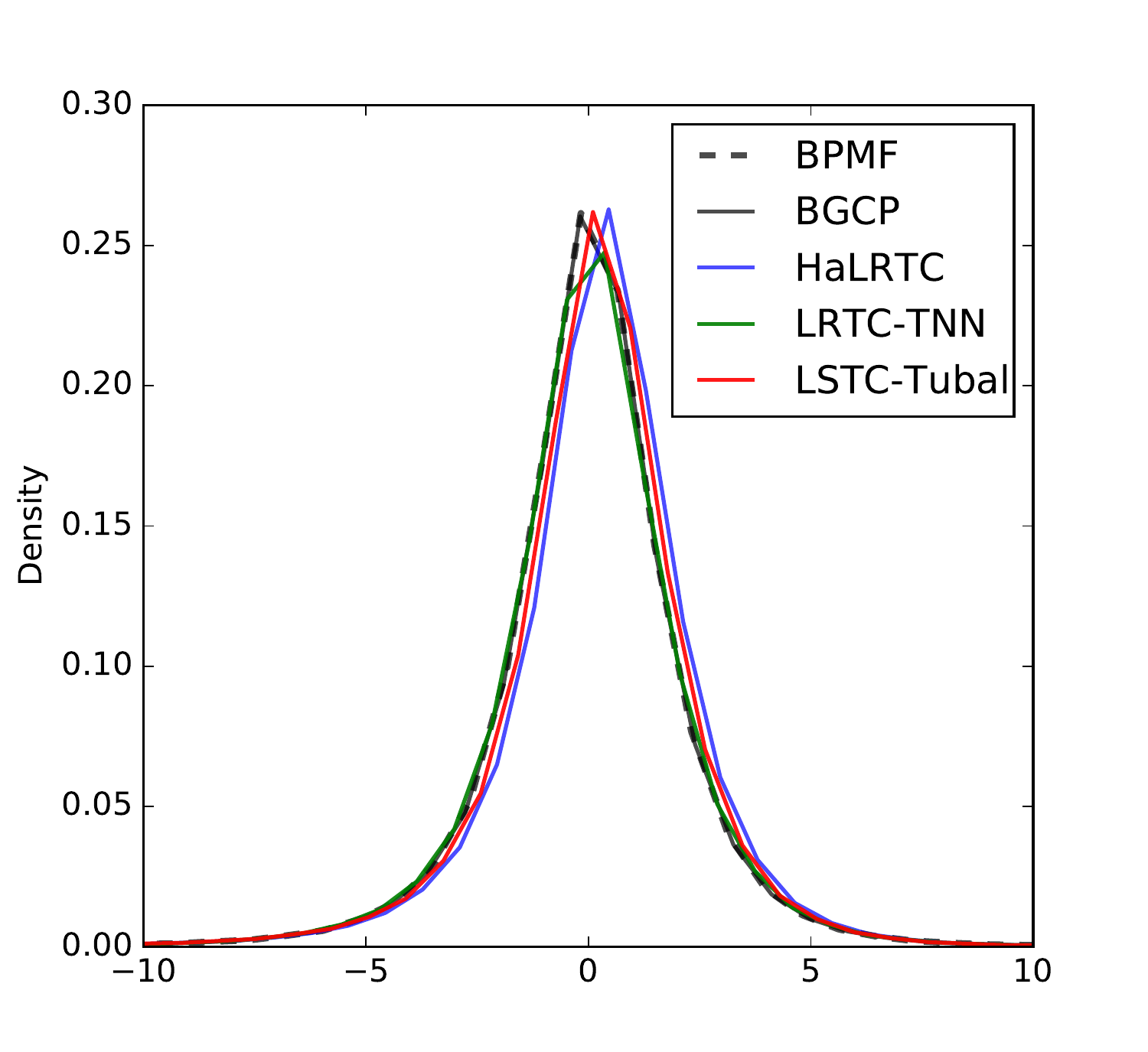}
}
\hspace{3em}
\subfigure[On 70\% RM Guangzhou-2M data.]{
    \centering
    \includegraphics[scale=0.45]{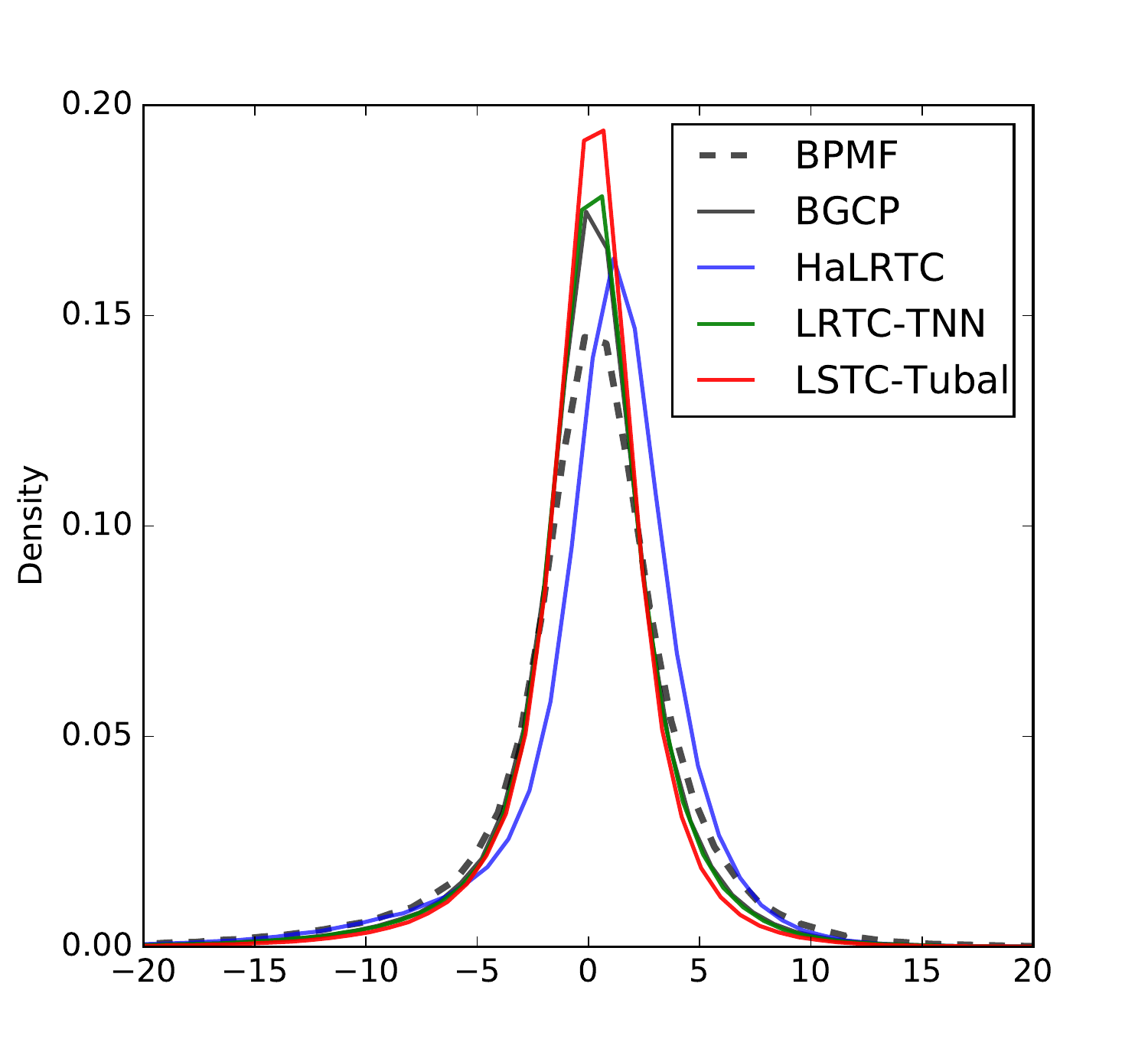}
}
\caption{Distribution of residuals between ground truth data and imputed data (i.e., $y_i-\hat{y}_{i}$) by using LSTC-Tubal and baseline models.}
\label{residual_distribution}
\end{figure*}

\subsubsection{Necessity of Imputation on Large Data}

As mentioned above, LSTC-Tubal model has significant superiority over other models for handling large-scale traffic data. Scalability is an important capability of models to handle data available over different scales.
%Scalability is an important aspect for measuring the model's capability on different scales of data. From this viewpoint, LSTC-Tubal provides new insights for modeling spatiotemporal traffic data.
%From another viewpoint, considering:
Yet, one can wonder whether it is necessary to perform imputation on large-scale data. To answer this question, we conduct some experiments on PeMS-4W data. Recall that PeMS-4W data has 11160 road segments. To perform imputation on different scales of data, we use the multilevel $k$-way graph partitioning algorithm on the adjacency matrix of PeMS-4W data as in \cite{mallick2020graph} and set the number of partitions for PeMS-4W data as $\{2,4,8,16,32,64\}$. Table~\ref{subset_splitting} shows the imputation performance of BGCP and LSTC-Tubal with different missing scenarios and numbers of partitions. As can be seen:
\begin{itemize}
    \item BGCP (low rank: 10): In the RM scenario, MAPE/RMSE decreases with the increasing numbers of partitions, and BGCP achieves best imputation performance when the number of partitions is 64. However, in the 30\% NM scenario, the best imputation performance is achieved when the number of partitions is 8/16. And in the 70\% NM scenario, the best performance is achieved when the number of partitions is 1/2.
    \item LSTC-Tubal: In the RM scenario, LSTC-Tubal performs best when there is no partitioning. However, in the NM scenario, MAPE/RMSE decreases with the increasing number of partitions.
\end{itemize}

\begin{figure*}[!ht]
\centering
\subfigure[On 70\% RM PeMS-4W data.]{
    \centering
    \includegraphics[width = \textwidth]{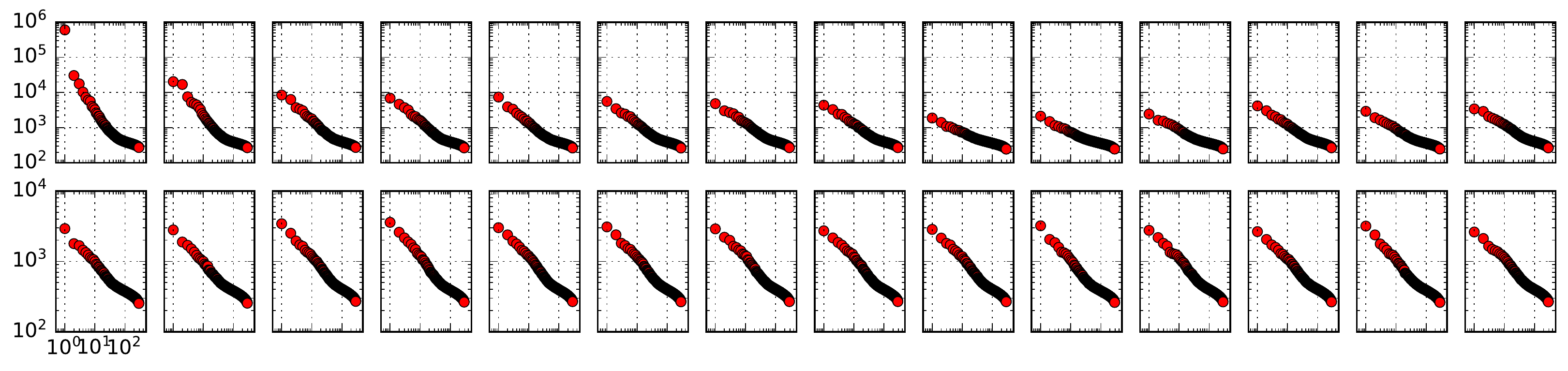}
}
\subfigure[On 70\% NM PeMS-4W data.]{
    \centering
    \includegraphics[width = \textwidth]{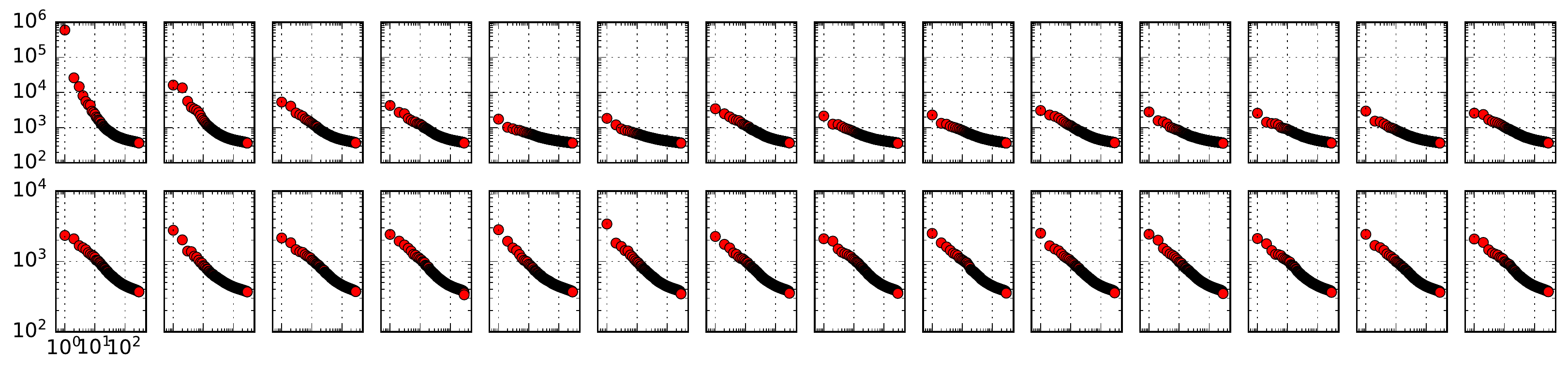}
}
\caption{Singular value scatters showing each $11160$-by-$288$ unitary transformed matrix (28 matrices in total) without graph partitioning. Note that singular values are sorted in decreasing orders ($\sigma_1\geq\sigma_2\geq\cdots$).}
\label{singular_values_case1}
\end{figure*}

% \begin{figure*}[!ht]
% \centering
% \subfigure[On Partition \#1 of 70\% RM PeMS-4W data.]{
%     \centering
%     \includegraphics[width = \textwidth]{}
% }
% \subfigure[On Partition \#2 of 70\% RM PeMS-4W data.]{
%     \centering
%     \includegraphics[width = \textwidth]{}
% }
% \subfigure[On Partition \#3 of 70\% RM PeMS-4W data.]{
%     \centering
%     \includegraphics[width = \textwidth]{}
% }
% \caption{Singular value scatters for showing each unitary transformed matrix (28 matrices in total) with 64 graph partitions.}
% \label{singular_values_case2}
% \end{figure*}

% \begin{figure*}[!ht]
% \centering
% \subfigure[On Partition \#1 of 70\% NM PeMS-4W data.]{
%     \centering
%     \includegraphics[width = \textwidth]{}
% }
% \subfigure[On Partition \#2 of 70\% NM PeMS-4W data.]{
%     \centering
%     \includegraphics[width = \textwidth]{}
% }
% \subfigure[On Partition \#3 of 70\% NM PeMS-4W data.]{
%     \centering
%     \includegraphics[width = \textwidth]{}
% }
% \caption{Singular value scatters for showing each unitary transformed matrix (28 matrices in total) with 64 graph partitions.}
% \label{singular_values_case2}
% \end{figure*}

Of these results, BGCP and LSTC-Tubal have different imputation performance. BGCP---a fully Bayesian tensor decomposition model with a fixed low rank---can preserve a more informative factorization structure for the RM scenario on small-scale data than on large-scale data due to the possibly strong similarity within each partition. However, in the NM scenario, learning a well-behaved BGCP model requires more informative data and therefore, BGCP performs better when there are fewer partitions.

Figure~\ref{singular_values_case1} shows that LSTC-Tubal model reports greater singular values for RM data than NM data. Compare to the NM scenario, there are more relatively large singular values obtained from LSTC-Tubal in the RM scenario. Table~\ref{subset_splitting} also demonstrates that LSTC-Tubal can learn informative singular values from large-scale data than from a series of subsets in the RM scenario. In the NM scenario, it seems that LSTC-Tubal can learn a series of effective low-rank structures from subsets than from large-scale data. In light of the above, there are two important factors that motivate the necessity of working on large-scale data: the pattern of missing data and the modeling mechanism. Therefore, modeling on large-scale data in this work is of great significance.

\begin{table}[!ht]
\caption{Imputation performance (in MAPE/RMSE) of BGCP and LSTC-Tubal for RM and NM data imputation tasks on PeMS-4W data with different numbers of partitions.}
\label{subset_splitting}
\centering
\footnotesize
\begin{tabular}{l|l|ccccccc}
\toprule
& & \multicolumn{7}{c}{Number of partitions} \\
\cmidrule(lr){3-9}
& & 1 & 2 & 4 & 8 & 16 & 32 & 64 \\
\midrule
\multirow{4}{*}{BGCP} & 30\%, RM & 5.03/4.31 & 4.94/4.26 & 4.93/4.23 & 4.82/4.16 & 4.75/4.11 & 4.59/3.98 & \textbf{4.42}/\textbf{3.84} \\
& 70\%, RM & 5.02/4.31 & 4.95/4.26 & 4.92/4.24 & 4.82/4.17 & 4.75/4.10 & 4.62/4.00 & \textbf{4.45}/\textbf{3.87} \\
& 30\%, NM & 5.38/4.59 & 5.35/4.56 & 5.26/4.50 & 5.20/\textbf{4.47} & \textbf{5.14}/4.50 & 5.22/5.73 & 5.86/24.56 \\
& 70\%, NM & \textbf{6.10}/\textbf{5.50} & 6.15/\textbf{5.50} & 6.22/5.97 & 6.96/10.25 & 8.61/30.80 & 9.29/38.39 & 13.60/72.51 \\
\midrule
\multirow{4}{*}{LSTC-Tubal} & 30\%, RM & \textbf{1.72}/\textbf{1.61} & 1.73/\textbf{1.61} & 1.75/1.62 & 1.82/1.65 & 1.91/1.72 & 2.00/1.79 & 2.10/1.89 \\
& 70\%, RM & \textbf{2.47}/\textbf{2.27} & 2.56/2.32 & 2.66/2.37 & 2.68/2.37 & 2.79/2.45 & 2.90/2.54 & 3.04/2.65 \\
& 30\%, NM & 5.59/4.52 & 5.52/4.47 & 5.39/4.38 & 5.27/4.29 & 5.15/4.20 & 4.94/4.05 & \textbf{4.69}/\textbf{3.89} \\
& 70\%, NM & 6.60/5.07 & 6.57/5.05 & 6.51/5.02 & 6.43/4.98 & 6.38/4.96 & 6.25/4.88 & \textbf{6.14}/\textbf{4.83} \\
\bottomrule
\end{tabular}
\end{table}

\section{Conclusion and Future Directions}\label{conclusion}

In this work, we develop a Low-Tubal-Rank Smoothing Tensor Completion (LSTC-Tubal) model for large-scale spatiotemporal imputation with superior efficiency. In this model, we use tensor nuclear norm minimization scheme to model the inherent low-rank property of traffic data (third-order tensors of sensor $\times$ time of day $\times$ day). This third-order tensor structure allows us to efficiently implement the nuclear norm minimization on the unitary transformed matrix for each day, while the day-to-day correlations can be preserved by the unitary transform matrix. Our experiments show that the proposed LSTC-Tubal model is very computationally efficient while still maintaining the imputation accuracy close to the state-of-the-art models. Experiments also show that the data-driven unitary transform matrix can harness effective traffic patterns.

% {\color{red}Experiment also shows that data-driven unitary transform matrix can harness more data patterns in the iterative process.}

This paper also provides valuable insight into spatiotemporal data modeling, and there are also some future directions to advance this research:
\begin{itemize}
    \item As shown in Lemma~\ref{tensor_svt}, the tensor singular value thresholding scheme is achieved by the singular value thresholding of matrix nuclear norm minimization. In the future, it would be feasible to adopt the truncated nuclear norm minimization for this scheme and improve the model performance.
    \item In this work, we only use the quadratic variation to smooth time series. It is possible to replace the quadratic variation by time series models for capturing temporal dynamics. In this way, the built model can be applied to not only missing traffic data imputation, but also traffic forecasting in the presence of missing values.
    % {\color{red}Following the LATC predictor proposed in \cite{chen2020lowrank}, we can easily adapt the LATC-Tubal model to spatiotemporal traffic forecasting tasks. Unlike the costly retraining process of LATC predictor, LATC-Tubal would reinforce the LATC predictor and forecast traffic without retraining the models using a large amount of data.}
\end{itemize}

\section*{Acknowledgement}

This research is supported by the Natural Sciences and Engineering Research Council (NSERC) of Canada, the Fonds de recherche du Quebec – Nature et technologies (FRQNT), and the Canada Foundation for Innovation (CFI). X. Chen would  like  to  thank  the  Institute  for  Data  Valorisation (IVADO) for providing the Ph.D.\ scholarship to support this study.

\appendix
% \renewcommand{\thesection}{\arabic{section}}

% \section{Comparison of singular value scatters in Figure~\ref{singular_values_case1}}

% xxx

\section{Computing sources}

For the experimental evaluation, we used a platform consisting of a 3.5 GHz Intel Core i5-6600K processor (4 cores in total) and 32 GB of RAM. It is also possible to run our code in a laptop with relatively smaller RAM. In the settings of software, we used Python 3.7 (with the Numpy and Pandas packages).

\bibliographystyle{elsarticle-harv}
\bibliography{tensor}

\end{document}